\PassOptionsToPackage{hyphens}{url}
\documentclass{article}

\usepackage{preprint,times}
\usepackage[colorlinks=true,linkcolor=blue,citecolor=blue,urlcolor=blue]{hyperref}
\usepackage{url}
\usepackage{graphicx}
\usepackage{subcaption}
\usepackage{amsthm}
\usepackage{booktabs}
\usepackage{amsmath}
\usepackage{amssymb}
\usepackage{mathtools}
\usepackage{multirow}
\usepackage{array}
\usepackage{algorithm}
\usepackage{algorithmic}
\usepackage{float}
\usepackage{placeins}
\usepackage{xcolor}
\definecolor{aaaidarkgray}{gray}{0.40}
\newtheorem{proposition}{Proposition}

\newtheorem{corollary}{Corollary}
\newtheorem{lemma}{Lemma}
\newcommand{\Cprompt}{\mathcal C_p}
\newcommand{\Decoder}{\mathcal D}
\newcommand{\Mask}{\mathtt{MASK}}

\iclrfinalcopy
\title{Archer: Adaptive Reuse of Cached Hidden States for Efficient Rollback in Diffusion Language Models}

\hypersetup{
  pdftitle={Archer: Adaptive Reuse of Cached Hidden States for Efficient Rollback in Diffusion Language Models},
  pdfauthor={Xuning He, Zinan Sheng, Yongding Tao, Huanyu Liu, Ge Li, Xue Jiang, and Yihong Dong}
}

\author{{\small\textbf{Xuning He$^{1,2}$, Zinan Sheng$^{3}$, Yongding Tao$^{3}$, Huanyu Liu$^{3}$, Ge Li$^{3}$, Xue Jiang$^{3}$, Yihong Dong$^{1}$}}\\[-1pt]
$^{1}$School of Computer Science, Shanghai Jiao Tong University\\
$^{2}$College of Artificial Intelligence, Nankai University\\
$^{3}$School of Computer Science, Peking University\\
\texttt{hxning@mail.nankai.edu.cn} \quad \texttt{dongyh@sjtu.edu.cn}}

\begin{document}
\maketitle

\begin{abstract}
Diffusion language models (DLMs) iteratively refine a sequence, allowing earlier predictions to be revised as context evolves. This rollback capability distinguishes them from irreversible autoregressive generation, but makes inference costly. Every denoising update alters the global context, forcing both prompt and response states to be recomputed even though only response tokens are revisable. Key-value (KV) caching could reduce this cost, yet conventional caching assumes immutable historical states and is therefore difficult to reconcile with rollback; in this paper, we introduce \textbf{A}daptive \textbf{R}euse of \textbf{C}ached \textbf{H}idden States for \textbf{E}fficient \textbf{R}ollback (\textbf{Archer}), a training-free KV caching method for rollback-capable DLMs.
Archer asymmetrically keeps the mutable response synchronized with the current hypothesis while reusing prompt K/V within a bounded state neighborhood. Although prompt representations also change under bidirectional attention, their token identities remain fixed; bounded reuse therefore amortizes repeated prompt computation without caching mutable response states. It also delays feedback from tentative tokens, reducing premature reinforcement of transient high-confidence errors and giving rollback more opportunity to correct them. Our analysis characterizes prompt reuse as a reversibility-aligned cache boundary, bounds its state-dependent approximation error, and gives a decoder-margin condition for preserving full-refresh decisions; meanwhile, existing DLM acceleration often trades quality for speed.
Archer shifts this frontier, attaining the best mean performance of $33.63\%$ together with a $2.57\times$ mean speedup on the main suite. Across evaluated settings, it improves Pass@1 by up to $3.05$ points and reaches up to $2.95\times$ speedup. Controlled analyses connect the quality gain to delayed prompt feedback and validate state-aware refresh.
Our code is available at \url{https://github.com/Hxnng/Archer}.
\end{abstract}

\section{Introduction}
\label{sec:introduction}

Diffusion language models (DLMs) generate through iterative, bidirectional denoising rather than an irreversible left-to-right factorization \citep{austin2021structured,li2022diffusionlm,sahoo2024simple}. This evolving state makes rollback possible: an earlier prediction can be reconsidered when later context reveals an inconsistency. Recent rollback-capable decoders show that this flexibility is particularly valuable for structured generation \citep{wang2025remdm,hong2025wino,dong2026saber}. Yet rollback requires many full-sequence forward passes, and the cost grows rapidly with the prompt and output length. Once rollback is used at scale, KV caching becomes necessary; the central question is how to introduce it without weakening the ability to revise the generation.

The same rollback that makes DLMs attractive also breaks the premise behind conventional KV caching. In autoregressive decoding, causal attention keeps previous contexts fixed, so their K/V states remain reusable \citep{pope2023efficiently,kwon2023pagedattention}. Under bidirectional attention, changing one generated token alters both the generation states and the prompt states that attend to it. Full recomputation preserves the intended rollback process but repeatedly processes an unchanged prompt; caching the entire sequence saves this work but retains activations derived from a generation that may no longer exist. Moreover, an eager prompt update immediately feeds every tentative token back into later predictions, which may reinforce a transient error before subsequent context can correct it. The challenge is therefore to locate a cache boundary that saves computation while leaving every revisable generation state current.

\begin{figure}[!t]
    \centering
    \includegraphics[width=\textwidth]{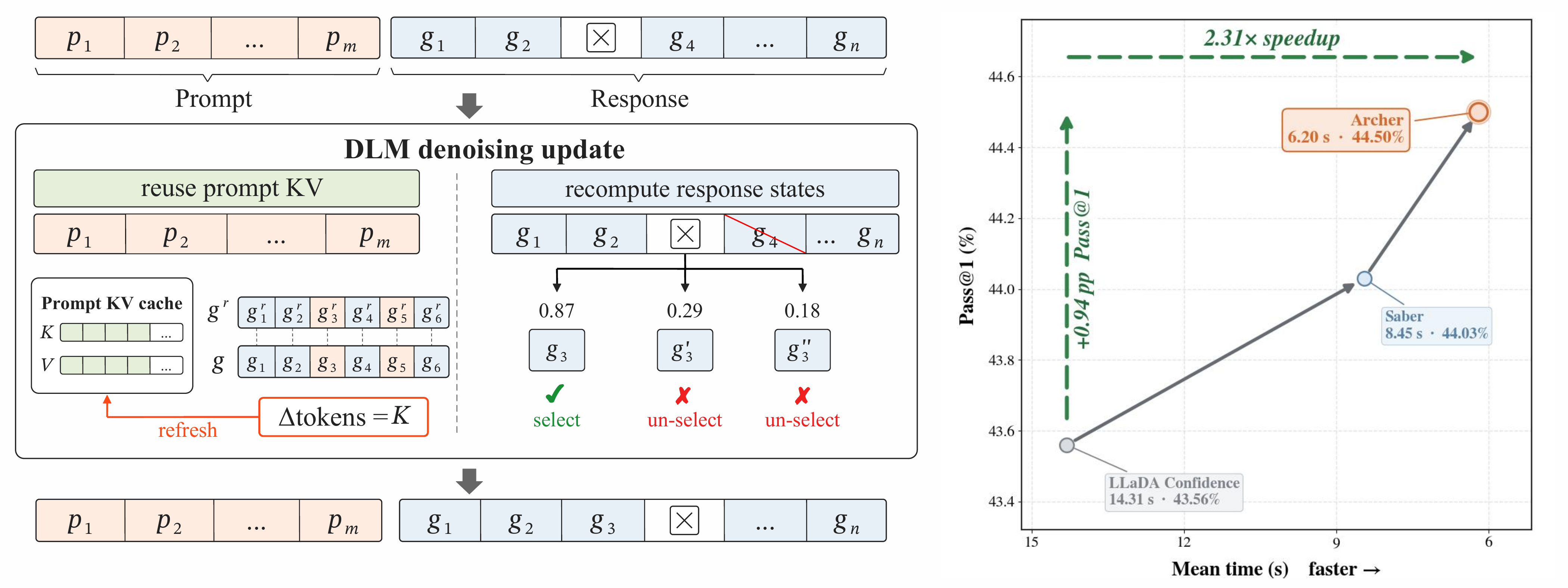}
    \caption{Archer reuses prompt K/V while recomputing the revisable response, reducing latency and improving Pass@1 on MBPP.}
    \label{fig:overview}
\end{figure}

Existing methods improve either rollback quality or DLM efficiency, but do not resolve their interaction. Token-level rollback decoders improve quality but typically repeat full-sequence computation \citep{wang2025remdm,hong2025wino,dong2026saber}. DLM accelerators reduce computation through blockwise generation, parallel token acceptance, delayed KV updates, or selective recomputation \citep{arriola2025block,wu2025fastdllm,ma2025dkvcache,liu2025dllmcache}, but are designed for decoding without token-level rollback. Their generation caches must either be frequently invalidated or retain states derived from tokens that have already changed, so efficient reuse and unrestricted rollback remain at odds. A method for scalable rollback must make this trade-off explicit rather than assume that conventional cache reuse transfers unchanged.

We introduce \textbf{A}daptive \textbf{R}euse of \textbf{C}ached \textbf{H}idden States for \textbf{E}fficient \textbf{R}ollback (\textbf{Archer}) to make this interaction explicit. Archer caches only prompt K/V states and recomputes the complete generation at every step, preserving rollback for every generated token. It refreshes the prompt cache when the current generation has moved sufficiently far from the state at which the cache was created. This state-aware policy amortizes prompt computation while preventing unbounded staleness. Bounded reuse also forms a temporal anchor that delays feedback from tentative tokens and can reduce premature error reinforcement. Our analysis identifies prompt states as a cache boundary aligned with rollback, derives the computational gain and a state-dependent approximation bound, and gives a decoder-margin condition under which Archer preserves the rollback decision of full recomputation. Archer thus turns the tension between caching and rollback into a controlled lag-and-reset process that alternates local reuse with global synchronization.

Across the main benchmarks, Archer breaks the usual quality--speed trade-off, achieving the best average performance at $33.63\%$, a $2.57\times$ average speedup, and up to $2.95\times$ speedup on a single benchmark. Across backbones, it improves Pass@1 by up to $3.05$ points and accelerates every tested pair by $1.36$--$1.78\times$. Relative to Saber, it improves Pass@1 on the original MBPP and LiveCodeBench suites as well as on the MBPP-ET and HumanEval-ET versions, and reduces latency on all three benchmarks. Controlled analyses support delayed prompt feedback and state-aware refresh. We \textbf{1)} formulate the conflict between KV caching and rollback and identify prompt states as the appropriate boundary; \textbf{2)} propose Archer with state-aware refresh and characterize its efficiency, approximation error, and decision fidelity; and \textbf{3)} show that rollback-compatible caching moves the DLM quality--speed frontier rather than forcing a choice between the two.

\section{Motivation}
\label{sec:motivation}

Rollback is a defining advantage of DLM decoding over an irreversible left-to-right process \citep{wang2025remdm,hong2025wino,dong2026saber}. It allows the current output to remain provisional until later context resolves earlier uncertainty, but every change is followed by another bidirectional forward pass over an increasingly long sequence. As prompts and outputs scale, repeatedly recomputing the entire context is no longer practical, so KV caching becomes necessary for rollback to remain usable. The design problem is to obtain this reuse without turning a revisable generation into fixed history. The resulting systems question is not whether to cache, but where reuse can coexist with unrestricted revision inside the same decoder.

\begin{figure}[t!]
    \centering
    \includegraphics[width=\textwidth]{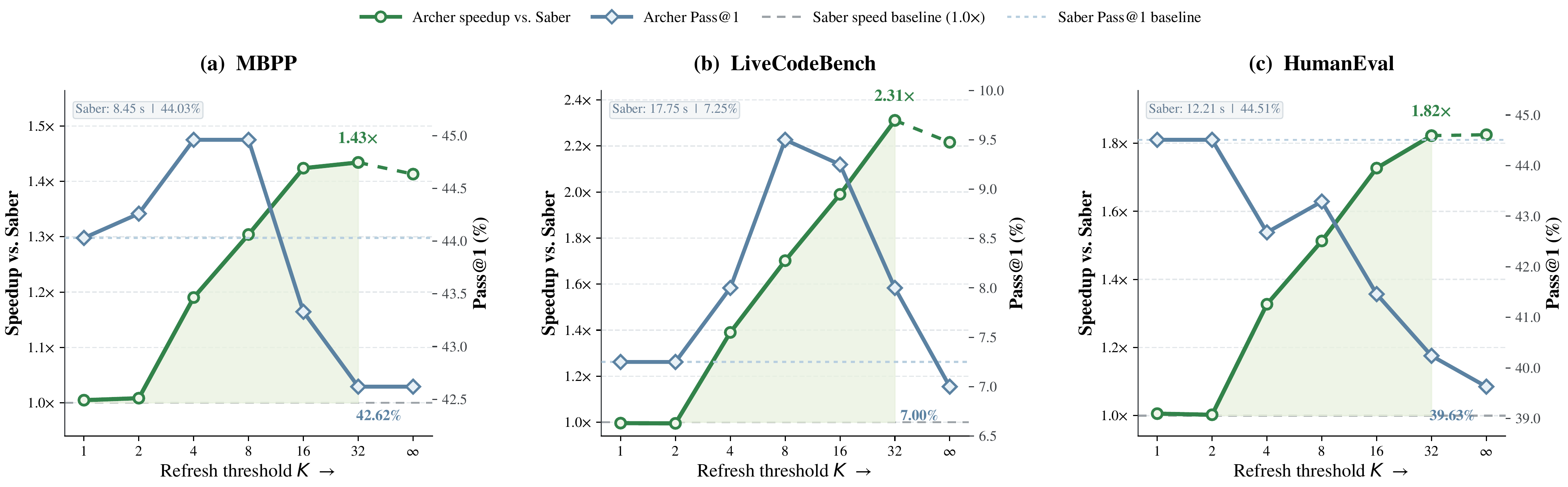}
    \caption{Effect of prompt-cache radius $K$. Longer reuse reduces latency, while quality peaks at moderate $K$ and declines with excessive staleness.}
    \label{fig:k-sensitivity}
\end{figure}

The cache boundary must follow the semantics of rollback rather than the convenience of implementation. Generation states represent precisely the part of the sequence that may change, and the affected positions are not known in advance. At step $t$, the rollback set is determined from the current context-dependent logits, which can be summarized as $\mathcal R_t=\mathcal B(F_\theta(p,g_t))$. If the unreliable tokens were already known before evaluating $F_\theta$, the decoder could remove them directly; rollback is needed because their reliability must first be reassessed as context evolves. Generation caching therefore creates a circular dependency. Determining which cached K/V and logits remain valid requires the same bidirectional generation computation that caching is intended to avoid, and one revised token can invalidate every state that depends on it. Exact generation-side reuse consequently approaches full recomputation, while approximate reuse introduces assumptions that may restrict rollback. Preserving unrestricted rollback therefore keeps the mutable generation outside the persistent cache and recomputes it before every decoding decision.

Prompt states provide the boundary that satisfies both requirements. Their values depend on the current output, but their token identities remain fixed, so reuse introduces contextual delay without making any generated token permanent. Archer therefore recomputes the complete generation at every step and reuses only prompt K/V. The eager feedback loop is
\begin{equation}
g_t
\longrightarrow \mathcal C_p(g_t)
\longrightarrow z_{t+1}
\longrightarrow g_{t+1},
\label{eq:motivation-feedback}
\end{equation}
where $\mathcal C_p(g_t)$ is the prompt state induced by the current generation. Replacing it temporarily with $\mathcal C_p(g_r)$ removes repeated prompt computation and delays the feedback of tentative tokens. The former improves speed; the latter gives rollback more opportunity to correct transient high-confidence errors before they reinforce themselves. This asymmetry follows the semantics of revision rather than an arbitrary implementation choice.

The cache lifetime controls whether this boundary yields a useful trade-off. Figure~\ref{fig:k-sensitivity} shows that increasing the reuse radius $K$ consistently reduces latency because prompt refreshes become less frequent. Quality is non-monotonic. A very small radius behaves similarly to eager execution and provides little reuse or temporal separation. A moderate radius delays premature feedback while keeping the prompt sufficiently representative of the current generation. An excessively large radius withholds useful context for too long, allowing approximation error to dominate. The rise-and-fall in quality shows that cache reuse must balance temporal anchoring against synchronization rather than maximize either one in isolation.

Rollback and KV caching become compatible when reuse follows what the decoder is allowed to change. Prompt-only reuse keeps the generation current, delays feedback for a bounded interval, and restores the full context through synchronization. Archer operationalizes this boundary as the state-anchored lag-and-reset policy studied below.

\section{Related Work}
\label{sec:related-work}

\subsection{Rollback in Diffusion Language Models}

DLMs replace left-to-right factorization with iterative denoising in continuous embeddings or discrete token spaces \citep{li2022diffusionlm,austin2021structured}. Advances in discrete objectives have substantially improved their modeling quality \citep{lou2024sedd,sahoo2024simple,li2024promises}, and recent systems such as LLaDA, Dream, and DiffuCoder have scaled the paradigm to general language modeling and code generation \citep{nie2025llada,ye2025dream,gong2025diffucoder}. Their changing intermediate states are not merely a sampling detail; they provide the rollback capability that motivates our systems design.

Recent decoders increasingly exploit rollback through flexible sampling. ReMDM derives a remasking transition, RemeDi learns to identify unreliable predictions, and WINO and Saber revisit token-level decisions as context evolves \citep{wang2025remdm,huang2025remedi,hong2025wino,dong2026saber}. These token-level methods generally process the revised sequence with another full forward pass. Reversible Diffusion Decoding instead returns to earlier blocks using cached block states \citep{wang2026rdd}, but does not address repeated prompt computation during fine-grained revision inside a bidirectional generation region. Archer treats that interaction as a systems constraint. Because generated tokens may change again, it reuses only fixed prompt states, leaving the sampling policy and its correction mechanism unchanged.

\subsection{Efficient DLM Inference}

Efficient DLM inference reduces either denoising steps or their per-step cost. Fast-dLLM, EB-Sampler, WINO, and Saber resolve multiple positions per pass according to confidence, entropy, or state changes \citep{wu2025fastdllm,benhamu2025ebsampler,hong2025wino,dong2026saber}. These methods shorten the trajectory, but each remaining pass still processes the full bidirectionally coupled sequence as prompts and outputs grow. The bottleneck therefore shifts from how many iterations are executed to how much repeated context each surviving iteration processes.

KV caching is exact for the immutable history of causal decoding \citep{pope2023efficiently,kwon2023pagedattention}, and fixed prompt modules can be shared across requests \citep{gim2024promptcache}. For DLMs, Block Diffusion creates cacheable semi-autoregressive structure during training \citep{arriola2025block}, while training-free methods use prefix or dual caches, delayed token-level updates, and similarity-guided partial recomputation \citep{wu2025fastdllm,ma2025dkvcache,liu2025dllmcache}. These approaches target conventional denoising and do not treat rollback as a cache-invalidation event. Directly introducing them into a rollback decoder can therefore retain generation-side computation from tokens later re-masked or replaced. Archer derives the cache boundary from rollback semantics. It amortizes prompt computation while recomputing every revisable generation state, so reuse neither commits a provisional token nor alters rollback semantics.

\section{Archer}
\label{sec:archer}

Archer realizes the preceding design as a training-free cache controller. It reuses prompt K/V, recomputes the complete generation region, and refreshes the cache according to response-state drift, while leaving the DLM and rollback policy unchanged. Figure~\ref{fig:archer-overview} makes the division explicit: prompt states are reused, while generation states remain revisable.

\subsection{Asymmetric State Reuse}

Let $p\in\mathcal V^P$ be the prompt, $g_t\in(\mathcal V\cup\{\mathtt{MASK}\})^G$ the current response, and $\mathcal D$ the rollback decoder. Any auxiliary sampler state is suppressed for notation. We write $\mathcal C_p(g)$ for the prompt K/V obtained under response $g$ and $\Phi_\theta(g;\mathcal C)$ for the generation forward using prompt cache $\mathcal C$. Eager decoding evaluates
\begin{equation}
z_t^\star
=\Phi_\theta(g_t;\mathcal C_p(g_t)),
\qquad
g_{t+1}=\mathcal D(g_t,z_t^\star).
\label{eq:archer-fresh}
\end{equation}

Suppose the current cache was created at response $g_r$. Archer replaces the eager logits with
\begin{equation}
\hat z_t
=\Phi_\theta(g_t;\mathcal C_p(g_r)).
\label{eq:archer-cached}
\end{equation}
The generation path is otherwise fresh. At layer $\ell$, its queries attend to anchored prompt K/V and current generation K/V,
\begin{equation}
\operatorname{Attn}\!\left(
Q_g^\ell(g_t),
[K_p^\ell(g_r)\Vert K_g^\ell(g_t)],
[V_p^\ell(g_r)\Vert V_g^\ell(g_t)]
\right).
\label{eq:archer-attention}
\end{equation}
Hence every generation embedding, hidden state, and logit reflects $g_t$; only response-to-prompt feedback remains anchored at $g_r$.

\ifdefined\begin{figure}[t!]
    \centering
    \includegraphics[width=\textwidth]{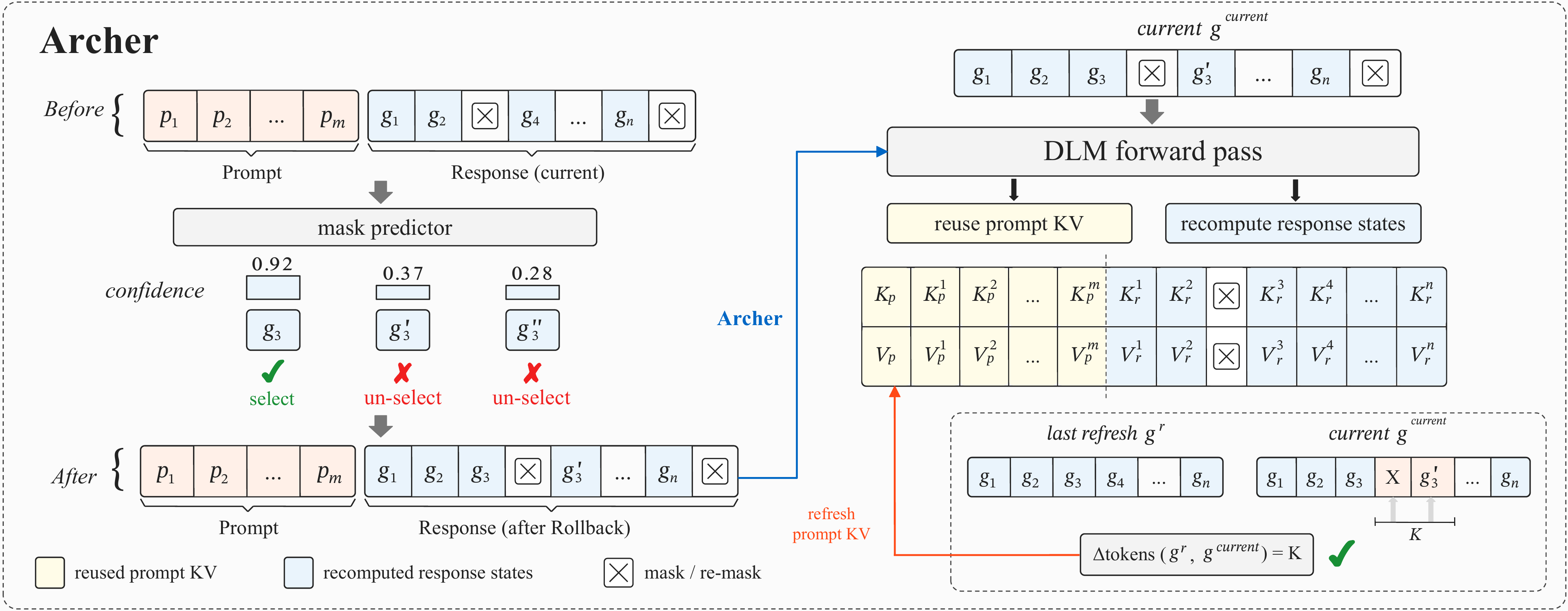}
    \caption{Archer's state-anchored prompt cache. Response states are recomputed at every update, and prompt K/V is refreshed when $d_H(g_t,g_r)\geq K$.}
    \label{fig:archer-overview}
\end{figure}

\fi

\subsection{State-Anchored Refresh}

Archer measures cache validity by the response change accumulated since the most recent refresh. The resulting anchor-relative distance is
\begin{equation}
D_t
=d_H(g_t,g_r)
=\sum_{i=1}^{G}\mathbb I[g_{t,i}\ne g_{r,i}]
\label{eq:archer-distance}
\end{equation}

Archer refreshes when $D_t\ge K$. A refresh performs a full forward, replaces the prompt cache, and sets $g_r\leftarrow g_t$. Otherwise Archer evaluates Eq.~\eqref{eq:archer-cached}. This makes synchronization depend on accumulated response drift rather than on the number of updates alone. The logits passed to the original decoder are therefore
\begin{equation}
\tilde z_t=
\begin{cases}
\Phi_\theta(g_t;\mathcal C_p(g_t)), & D_t\ge K,\\
\Phi_\theta(g_t;\mathcal C_p(g_r)), & D_t<K,
\end{cases}
\qquad
g_{t+1}=\mathcal D(g_t,\tilde z_t).
\label{eq:archer-controller}
\end{equation}
The controller responds to net state drift rather than elapsed iterations. Several negligible updates may continue to reuse an anchor, while one large rollback can trigger immediate synchronization.

Algorithm~\ref{alg:archer} gives the complete procedure. \textsc{FullForward} returns exact logits and a new prompt cache; \textsc{CachedForward} reconstructs the current generation using that cache. Token acceptance, replacement, re-masking, and termination remain entirely governed by $\mathcal D$.

\begin{algorithm}[t]
\caption{State-anchored rollback decoding with Archer}
\label{alg:archer}
\begin{algorithmic}[1]
\REQUIRE Prompt $p$, initial response $g_0$, rollback decoder $\mathcal D$, radius $K$
\STATE $g\leftarrow g_0$; $g_r\leftarrow g_0$
\STATE $(z,\mathcal C_r)\leftarrow\textsc{FullForward}(p,g)$
\WHILE{$\neg\textsc{Terminated}(g)$}
    \STATE $g\leftarrow\mathcal D(g,z)$
    \IF{$\neg\textsc{Terminated}(g)$}
        \IF{$d_H(g,g_r)\ge K$}
            \STATE $(z,\mathcal C_r)\leftarrow\textsc{FullForward}(p,g)$
            \STATE $g_r\leftarrow g$
        \ELSE
            \STATE $z\leftarrow\textsc{CachedForward}(g;\mathcal C_r)$
        \ENDIF
    \ENDIF
\ENDWHILE
\RETURN $g$
\end{algorithmic}
\end{algorithm}

\section{Theoretical Analysis}
\label{sec:theory}

Archer keeps the generation synchronized with the current hypothesis while allowing its interaction with the prompt to lag. Bidirectional dependence makes this asymmetry necessary for rollback, and the resulting state radius links its computational saving to its effect on decoder behavior.

\subsection{Caching under Revision}

Let $\Delta^0$ contain the generation positions changed by rollback, and let $\Delta^\ell$ contain all positions whose layer-$\ell$ states may depend on $\Delta^{\ell-1}$. Dense bidirectional attention gives
\begin{equation}
\Delta^0\neq\varnothing
\quad\Longrightarrow\quad
\Delta^1=\{1,\ldots,P+G\}.
\label{eq:theory-invalidation}
\end{equation}
This is a statement about structural dependence rather than numerical magnitude, but it rules out a universal exact-reuse guarantee for generation states after an arbitrary edit.

\begin{proposition}[Rollback-compatible cache boundary]
\label{prop:rollback-boundary}
Under arbitrary response revision, exact generation-state reuse requires recomputing its full bidirectional dependency closure. Archer instead caches no generation state and therefore preserves every replacement and re-masking action available to the original decoder.
\end{proposition}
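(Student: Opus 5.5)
The proposition has two parts, and I would prove them separately: a necessity statement about generation-state reuse, and a preservation statement about Archer.

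\textbf{First part (necessity).} I would argue by induction over layers using the invalidation relation in Eq.~\eqref{eq:theory-invalidation}.

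Fix an arbitrary revision that changes the set $\Delta^0\neq\varnothing$ of response positions. A cached layer-$\ell$ state at position $j$ is exactly reusable only if that state is provably independent of $\Delta^0$. Its dependence is captured by the closure $\Delta^\ell$.

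Dense bidirectional attention gives $\Delta^1=\{1,\ldots,P+G\}$. Since $\Delta^{\ell}\supseteq\Delta^{1}$ for all $\ell\ge 1$ (every layer again attends densely), every position at every layer $\ell\ge1$ lies in the dependency closure.

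To show that exact reuse genuinely \emph{requires} recomputing this closure, rather than merely that a structural dependence exists, I would use an adversarial argument. Because the revision is arbitrary, for any position $j\in\Delta^\ell$ one can choose an edit and attention weights for which the layer-$\ell$ state at $j$ actually differs. Concretely, attention weights are strictly positive under softmax, and the value vectors of the changed token differ generically. So no rule that does not evaluate the dependency can certify the cached value as exact.

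This yields the statement as a worst-case, "universal guarantee" claim, which is precisely the caveat the paper attaches after Eq.~\eqref{eq:theory-invalidation}. The main obstacle is being careful here: I would phrase the claim as the absence of a guarantee uniform over edits and weights, not as numerical inequality for every instance.

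\textbf{Second part (preservation).} I would read it off Eqs.~\eqref{eq:archer-cached}--\eqref{eq:archer-controller} and Algorithm~\ref{alg:archer}.

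At every step, Archer supplies $\mathcal D$ with logits $\tilde z_t=\Phi_\theta(g_t;\mathcal C)$. Here the generation queries, keys, and values are all computed from the current $g_t$, and only prompt K/V may be anchored, as in Eq.~\eqref{eq:archer-attention}. No generation position is ever frozen or excluded from the forward pass. Hence $\tilde z_t$ is defined at every generation position, and $\mathcal D$ is invoked unchanged with domain $(g_t,\tilde z_t)$ of the same type as in eager decoding, Eq.~\eqref{eq:archer-fresh}.

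Because the action set of $\mathcal D$ (which positions may be replaced or re-masked) depends only on the pair's type and on $\mathcal D$ itself, and not on how the logits were produced, every replacement and re-masking action available to the original decoder remains available. The cache never forces a generated token to be treated as committed history.

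I would close by noting what this part does not claim: it does not say the chosen action coincides with eager decoding. That question is deferred to the later decoder-margin result.
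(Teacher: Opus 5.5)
Your proposal is correct and follows essentially the same two-part argument as the paper. The first part matches the dense-attention closure argument of Proposition~\ref{prop:app-closure}: every query attends to every changed key, so $\Delta^1$ and all later $\Delta^\ell$ cover the whole sequence, and certifying that a dependent state is unchanged would require evaluating it. The second part matches Proposition~\ref{prop:app-revisability}: no response tensor from an earlier $g$ is fed to the model, and a full response-logit tensor is passed to the unmodified $\mathcal D$, so its action domain is untouched even if the chosen action differs. Your adversarial realizability step (positive softmax weights, generically different values) slightly strengthens the paper's purely structural ``possible dependence'' reading but does not change the route.
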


Archer avoids retaining invalidated generation computation because every revised token is re-embedded and propagated through all layers before the next decision. Prompt staleness may change the selected action, but it cannot make a response position immutable; rollback \emph{capability} is preserved even when cached and eager trajectories differ.

\subsection{Efficiency and Fidelity}

Let $N=P+G$. At layer $\ell$, let $\alpha_\ell$ collect projection and feed-forward work per position and $\beta_\ell$ the cost of a query--key interaction. The leading costs are
\begin{equation}
\begin{aligned}
C_{\rm full}
&=\sum_{\ell=1}^{L}
\left(\alpha_\ell N+\beta_\ell N^2\right),\\
C_{\rm cache}
&=\sum_{\ell=1}^{L}
\left(\alpha_\ell G+\beta_\ell GN\right)+H,
\end{aligned}
\label{eq:theory-cost}
\end{equation}
where $H$ is cache overhead. Archer retains the full receptive field while removing prompt queries, projections, and feed-forward paths. Ignoring $H$, a cached step costs approximately $G/(P+G)$ of a full step. With refresh fraction $q_K$,
\begin{equation}
S_{\rm end}(K)
\approx
\left[
q_K+(1-q_K)\frac{G}{P+G}
\right]^{-1}.
\label{eq:theory-speedup}
\end{equation}
The sequence partition determines the saving per cached step, while the refresh controller determines how often that saving is realized.

Consider eager and cached logits at the same response $g_t$, where the prompt cache is their only difference. Let $E(g)$ stack the response embeddings and assume locally that
\begin{equation}
\begin{gathered}
\|\mathcal C_p(g)-\mathcal C_p(g')\|
\le L_C\|E(g)-E(g')\|_F,\\
\|\Phi_\theta(g;\mathcal C)-\Phi_\theta(g;\mathcal C')\|_\infty
\le L_\Phi\|\mathcal C-\mathcal C'\|.
\end{gathered}
\label{eq:theory-smoothness}
\end{equation}
If the embedding diameter is bounded by $B$, responses at Hamming distance $D_t$ satisfy $\|E(g_t)-E(g_r)\|_F\le B\sqrt{D_t}$. With $\Gamma=L_\Phi L_C B$, every cached step obeys
\begin{equation}
\|z_t^\star-\hat z_t\|_\infty
\le\Gamma\sqrt{D_t}
<\Gamma\sqrt K.
\label{eq:theory-radius-bound}
\end{equation}
Thus $K$ bounds the state perturbation that causes cache error rather than the elapsed time since refresh.
Consequently, cache age alone cannot determine validity. Equally old caches may encode substantially different response changes since their anchors were created.

The decoder need not preserve every logit; it only needs to preserve the next action. Define local decision margin as
\begin{equation}
m_{\mathcal D}(g,z)
=
\inf_{\delta}
\left\{
\|\delta\|_\infty:
\mathcal D(g,z+\delta)\ne\mathcal D(g,z)
\right\}.
\label{eq:theory-margin}
\end{equation}
Archer and eager decoding select same transition whenever
\begin{equation}
\Gamma\sqrt{D_t}
<m_{\mathcal D}(g_t,z_t^\star),
\label{eq:theory-decision}
\end{equation}
and the equality propagates through the trajectory when the condition holds at every cached step. Otherwise the trajectories may differ, but Proposition~\ref{prop:rollback-boundary} still preserves revisability. The relevant guarantee is decision fidelity under bounded state drift, not numerical identity.

\subsection{Prompt Reuse as Feedback Control}

Prompt reuse also changes the local sensitivity of the decoder. The eager and anchored logit maps are
\begin{equation}
\begin{aligned}
z^{\rm fresh}(g)
&=\Phi_\theta(g;\mathcal C_p(g)),\\
z_r^{\rm anchor}(g)
&=\Phi_\theta(g;\mathcal C_p(g_r)).
\end{aligned}
\label{eq:theory-two-maps}
\end{equation}
Within a fixed anchor interval, the chain rule gives
\begin{equation}
\begin{gathered}
J_{\rm fresh}
=\partial_g\Phi_\theta
+\partial_{\mathcal C}\Phi_\theta\,\partial_g\mathcal C_p,\\
J_{\rm anchor}
=\partial_g\Phi_\theta.
\end{gathered}
\label{eq:theory-jacobian}
\end{equation}
Prompt reuse leaves direct generation interaction and rollback intact while temporarily suppressing the cross-region feedback term
\(\partial_{\mathcal C}\Phi_\theta\,\partial_g\mathcal C_p\).
This constitutes temporal anchoring rather than freezing because current generation interactions remain fresh while only their return path through the prompt is delayed.

If the cross-region term amplifies a provisional error direction, delaying it reduces immediate self-reinforcement and gives rollback additional evidence with which to revise the token. The same delay harms the next decision when that term carries useful new context. Refresh restores the full Jacobian and clears the accumulated discrepancy. Stale prompts are not intrinsically more accurate, but bounded prompt staleness can provide short-term regularization while periodic synchronization preserves long-term consistency.

The radius $K$ controls the frequency of saved prompt computation, the size of the decision-relevant approximation, and the duration for which cross-region feedback is withheld. Archer thereby balances efficiency, trajectory fidelity, and correction dynamics without caching the mutable object that rollback is designed to revise.

\section{Experimental Results}
\label{sec:experiments}

Our experiments comprise a main comparison, a cross-backbone evaluation, and two controlled analyses of delayed prompt feedback and cache refresh. For a direct comparison with Saber, we follow its code-generation setting and evaluate on MBPP \citep{austin2021mbpp}, LiveCodeBench \citep{jain2025livecodebench}, and HumanEval \citep{chen2021humaneval}. Their executable tests provide a precise measure of whether revisions preserve functional correctness. This benchmark choice ensures comparability rather than limiting the mechanism to code: Archer observes only generation-state changes and uses neither code-specific structure nor execution feedback. The technical supplement provides the complete protocol, proofs, sensitivity results, and intervention analyses needed to reproduce and interpret this evaluation.

\begin{table}[t]
\centering
\caption{Comparison with DLM sampling and cache baselines. Base/ET denote Pass@1 (\%) on the original/Extended Test Cases versions of MBPP and HumanEval; Overall averages five quality scores and three benchmark speedups. Speedup is relative to LLaDA-Confidence; bold/underline mark first/second.}
\label{tab:main-results}
\footnotesize
\setlength{\tabcolsep}{0.6pt}
\resizebox{\textwidth}{!}{%
\begin{tabular}{lccccccccccccc}
\toprule
& \multicolumn{4}{c}{MBPP} & \multicolumn{3}{c}{LiveCodeBench} & \multicolumn{4}{c}{HumanEval} & \multicolumn{2}{c}{Overall} \\
\cmidrule(lr){2-5}\cmidrule(lr){6-8}\cmidrule(lr){9-12}\cmidrule(l){13-14}
\multirow{2}{*}[-1.5pt]{Method}
& \multicolumn{2}{c}{Pass@1 $\uparrow$}
& \multirow{2}{*}[-1.5pt]{Time $\downarrow$}
& \multirow{2}{*}[-1.5pt]{Speedup $\uparrow$}
& \multicolumn{1}{c}{Pass@1 $\uparrow$}
& \multirow{2}{*}[-1.5pt]{Time $\downarrow$}
& \multirow{2}{*}[-1.5pt]{Speedup $\uparrow$}
& \multicolumn{2}{c}{Pass@1 $\uparrow$}
& \multirow{2}{*}[-1.5pt]{Time $\downarrow$}
& \multirow{2}{*}[-1.5pt]{Speedup $\uparrow$}
& \multirow{2}{*}[-1.5pt]{Perf. $\uparrow$}
& \multirow{2}{*}[-1.5pt]{Speedup $\uparrow$} \\[-2pt]
\cmidrule(lr){2-3}\cmidrule(lr){6-6}\cmidrule(lr){9-10}
& Base & ET & & & Base & & & Base & ET & & & & \\[-1pt]
\midrule
\multicolumn{14}{l}{\textcolor{aaaidarkgray}{\textit{Standard DLM Sampling}}} \\
LLaDA-Confidence
& 43.56 & 30.44 & 14.31 & 1.00$\times$
& 8.75 & 28.98 & 1.00$\times$
& \underline{43.29} & 35.37 & 19.86 & 1.00$\times$
& 32.28 & 1.00$\times$ \\
\midrule
\multicolumn{14}{l}{\textcolor{aaaidarkgray}{\textit{Efficient DLM Sampling}}} \\
Fast-dLLM + Cache
& \underline{44.50} & 31.62 & 11.85 & 1.21$\times$
& 7.75 & 12.75 & 2.27$\times$
& 41.46 & 36.59 & 11.95 & 1.66$\times$
& 32.38 & 1.71$\times$ \\
Fast-dLLM + Parallel
& 40.05 & 29.27 & \textbf{3.40} & \textbf{4.21}$\times$
& 8.25 & \textbf{4.85} & \textbf{5.98}$\times$
& 41.46 & 36.59 & \textbf{4.60} & \textbf{4.32}$\times$
& 31.12 & \textbf{4.84}$\times$ \\
dKV-Cache-Decode
& \textbf{45.67} & \underline{32.32} & 12.62 & 1.13$\times$
& \textbf{10.00} & 17.66 & 1.64$\times$
& 42.07 & 35.98 & 14.54 & 1.37$\times$
& \underline{33.21} & 1.38$\times$ \\
dKV-Cache-Greedy
& 23.19 & 18.27 & 12.36 & 1.16$\times$
& 2.00 & 21.25 & 1.36$\times$
& 15.24 & 14.63 & 15.82 & 1.26$\times$
& 14.67 & 1.26$\times$ \\
dLLM-Cache
& 44.03 & \underline{32.32} & 9.28 & 1.54$\times$
& \textbf{10.00} & 11.53 & 2.51$\times$
& 40.24 & 34.15 & 10.16 & 1.96$\times$
& 32.15 & 2.00$\times$ \\
\midrule
\multicolumn{14}{l}{\textcolor{aaaidarkgray}{\textit{Rollback-Capable DLM Decoding}}} \\
Saber
& 44.03 & 31.38 & 8.45 & 1.69$\times$
& 7.25 & 17.75 & 1.63$\times$
& \textbf{44.51} & \underline{37.20} & 12.13 & 1.64$\times$
& 32.87 & 1.65$\times$ \\
\textbf{Archer}
& \underline{44.50} & \textbf{32.55} & \underline{6.20} & \underline{2.31}$\times$
& \textbf{10.00} & \underline{9.81} & \underline{2.95}$\times$
& \underline{43.29} & \textbf{37.80} & \underline{8.07} & \underline{2.46}$\times$
& \textbf{33.63} & \underline{2.57}$\times$ \\
\bottomrule
\end{tabular}}
\end{table}

\subsection{Main Results}
\label{sec:main-results}

We compare Archer with LLaDA \citep{nie2025llada}, Fast-dLLM, dKV-Cache, and dLLM-Cache \citep{wu2025fastdllm,ma2025dkvcache,liu2025dllmcache}, and the rollback-capable Saber \citep{dong2026saber}. All methods share the model revision, prompts, output length, deterministic decoding, and evaluator. Archer uses $K=11$ on MBPP, $K=10$ on LiveCodeBench, and $K=8$ on HumanEval. For MBPP and HumanEval, we report Pass@1 on the original benchmark (Base) and on its Extended Test Cases (ET) version, which retains the same tasks while adding edge-case tests \citep{dong2025codescore}; LiveCodeBench uses its standard Pass@1. Table~\ref{tab:main-results} reports these scores, latency, and speedup relative to LLaDA-Confidence.

Archer achieves the strongest overall quality--efficiency balance. Its $33.63\%$ average performance is the best result, and its $2.57\times$ average speedup is second only to aggressive parallel decoding, making it the only method in the top two for both metrics.

The comparison with Saber isolates the benefit of prompt-state reuse under the same rollback process. Archer reduces mean latency from 8.45 to 6.20 seconds on MBPP, 17.75 to 9.81 seconds on LiveCodeBench, and 12.13 to 8.07 seconds on HumanEval, corresponding to $1.36\times$, $1.81\times$, and $1.50\times$ speedups over Saber. Quality improves by $+0.47$ points on MBPP Base and $+1.17$ points on MBPP-ET, and by $+2.75$ points on LiveCodeBench. On HumanEval, Archer improves HumanEval-ET Pass@1 by $+0.60$ points while Base Pass@1 decreases by $1.22$ points.

The remaining baselines expose the two ends of the trade-off. Fast-dLLM with parallel decoding attains the lowest latency, but its average performance is $2.51$ points below Archer. dKV-Cache-Decode is the strongest competing method in average performance at $33.21\%$, yet provides only $1.38\times$ average speedup compared with Archer's $2.57\times$. These comparisons place Archer on the strongest measured quality--efficiency frontier.

\subsection{Generalization across DLM Backbones}
\label{sec:backbone-results}

We apply Archer without training to LLaDA-8B-Instruct \citep{nie2025llada}, Dream-v0-Instruct-7B \citep{ye2025dream}, and DiffuCoder-7B-cpGRPO \citep{gong2025diffucoder}, and re-run Saber on every setting, preserving each model's native masking and logit conventions for a matched comparison.

\begin{table}[H]
\centering
\caption{Cross-backbone comparison with Saber. Base/ET denote Pass@1 (\%) on the original/Extended Test Cases versions of MBPP and HumanEval; Time is seconds, Speedup is relative to Saber, and Overall averages six quality scores and three speedups.}
\label{tab:backbone-results}
\footnotesize
\setlength{\tabcolsep}{0.35pt}
\resizebox{\textwidth}{!}{%
\begin{tabular}{llcccccccccccccc}
\toprule
& & \multicolumn{4}{c}{LLaDA-8B}
& \multicolumn{4}{c}{Dream-7B}
& \multicolumn{4}{c}{DiffuCoder-7B}
& \multicolumn{2}{c}{Overall} \\
\cmidrule(lr){3-6}\cmidrule(lr){7-10}\cmidrule(lr){11-14}\cmidrule(l){15-16}
\multirow{2}{*}[-1.5pt]{Dataset}
& \multirow{2}{*}[-1.5pt]{Method}
& \multicolumn{2}{c}{Pass@1 $\uparrow$}
& \multirow{2}{*}[-1.5pt]{Time $\downarrow$}
& \multirow{2}{*}[-1.5pt]{Speedup $\uparrow$}
& \multicolumn{2}{c}{Pass@1 $\uparrow$}
& \multirow{2}{*}[-1.5pt]{Time $\downarrow$}
& \multirow{2}{*}[-1.5pt]{Speedup $\uparrow$}
& \multicolumn{2}{c}{Pass@1 $\uparrow$}
& \multirow{2}{*}[-1.5pt]{Time $\downarrow$}
& \multirow{2}{*}[-1.5pt]{Speedup $\uparrow$}
& \multirow{2}{*}[-1.5pt]{Perf. $\uparrow$}
& \multirow{2}{*}[-1.5pt]{Speedup $\uparrow$} \\[-2pt]
\cmidrule(lr){3-4}\cmidrule(lr){7-8}\cmidrule(lr){11-12}
& & Base & ET & & & Base & ET & & & Base & ET & & & & \\[-1pt]
\midrule
\multirow{2}{*}{MBPP}
& Saber
& 44.03 & 31.38 & 8.45 & 1.00$\times$
& 52.93 & 41.69 & 8.18 & 1.00$\times$
& 59.48 & 44.73 & 12.35 & 1.00$\times$
& 45.71 & 1.00$\times$ \\
& \textbf{Archer}
& \textbf{44.50} & \textbf{32.55} & \textbf{6.20} & \textbf{1.36}$\times$
& \textbf{53.63} & \textbf{42.15} & \textbf{5.51} & \textbf{1.48}$\times$
& \textbf{60.19} & \textbf{45.20} & \textbf{8.97} & \textbf{1.38}$\times$
& \textbf{46.37} & \textbf{1.41}$\times$ \\
\midrule
\multirow{2}{*}{HumanEval}
& Saber
& \textbf{44.51} & \textbf{37.20} & 12.13 & 1.00$\times$
& 29.88 & \textbf{28.05} & 10.40 & 1.00$\times$
& 57.32 & 51.83 & 15.57 & 1.00$\times$
& 41.47 & 1.00$\times$ \\
& \textbf{Archer}
& 43.29 & \textbf{37.20} & \textbf{7.29} & \textbf{1.66}$\times$
& \textbf{31.10} & \textbf{28.05} & \textbf{5.85} & \textbf{1.78}$\times$
& \textbf{60.37} & \textbf{54.27} & \textbf{9.24} & \textbf{1.68}$\times$
& \textbf{42.38} & \textbf{1.71}$\times$ \\
\bottomrule
\end{tabular}}
\end{table}

Table~\ref{tab:backbone-results} shows that the acceleration transfers across all six model--benchmark pairs, with speedups ranging from $1.36\times$ to $1.78\times$. Archer also improves Pass@1 in five settings, including a $3.05$-point gain on DiffuCoder--HumanEval. Averaged across backbones, Archer raises performance from $45.71\%$ to $46.37\%$ on MBPP and from $41.47\%$ to $42.38\%$ on HumanEval, while accelerating inference by $1.41\times$ and $1.71\times$, respectively. Prompt reuse therefore improves the quality--speed frontier across independently trained DLMs rather than exploiting behavior specific to one backbone.

\subsection{Effect of Delayed Prompt Feedback}
\label{sec:feedback-probe}

The end-to-end comparison does not isolate why controlled staleness can improve quality, because two decoding trajectories may diverge for many reasons after their first different update. We therefore intervene at a shared state and vary only the timing of prompt feedback. For every MBPP problem, we find the first token accepted with confidence $p\geq0.9$ and clone the complete decoder state immediately after that acceptance. \emph{Fresh} then rebuilds prompt K/V before the next update, so the accepted token affects the prompt representation immediately. \emph{Cached} retains the preceding prompt snapshot, delaying that influence while leaving the response and rollback rule unchanged. We follow the selected token for five updates and evaluate both final completions. This paired construction turns feedback timing into the only controlled difference between the two branches.

\begin{table}[H]
\centering
\caption{Matched-state feedback intervention on MBPP (427 pairs). Rev.@5 is the five-step revision rate; ``Only pass'' counts branch-exclusive successes.}
\label{tab:fresh-cached}
\footnotesize
\setlength{\tabcolsep}{2.0pt}
\begin{tabular*}{\columnwidth}{@{\extracolsep{\fill}}lccccc@{}}
\toprule
\multirow{2}{*}[-1.5pt]{Branch}
& \multicolumn{2}{c}{Pass@1 $\uparrow$}
& \multirow{2}{*}[-1.5pt]{Rev.@5}
& \multirow{2}{*}[-1.5pt]{Rev. step}
& \multirow{2}{*}[-1.5pt]{Only pass} \\[-2pt]
\cmidrule(lr){2-3}
& Base & ET & & & \\[-1pt]
\midrule
Fresh  & 43.33 & 31.38 & 99.53 & 1.028 & 18 \\
Cached & \textbf{44.50} & \textbf{32.55} & 99.53 & 1.056 & \textbf{23} \\
\bottomrule
\end{tabular*}
\end{table}

Table~\ref{tab:fresh-cached} shows that Cached improves Pass@1 by $1.17$ points on both MBPP Base and MBPP-ET and uniquely solves 23 problems, compared with 18 for Fresh. Importantly, both branches revise $99.53\%$ of the selected tokens, and do so after nearly the same number of updates. The quality difference therefore cannot be explained by Cached disabling or postponing rollback itself. It arises while the two branches retain the same correction mechanism but expose it to different prompt contexts, providing direct evidence that feedback timing can alter the functional outcome of rollback-capable generation under otherwise identical correction rules.

\subsection{Why Refresh by State Distance?}
\label{sec:validity-signal}

Having shown that feedback timing matters, we next ask when a cached prompt state should be synchronized. Archer measures how far the current response has moved from the cache anchor, $D_t=d_H(g_t,g_r)$. The simplest alternative is cache age, $A_t=t-r$, which refreshes after a fixed number of updates. Age treats all updates as equally damaging even though some change almost nothing and a single rollback may replace several response tokens. State distance instead measures the change that can actually invalidate the anchored prompt context. The relevant comparison is therefore not which signal best predicts small numerical logit drift, but which one better identifies reuse that changes the decoder's next action.

At sampled cached steps on MBPP, we execute an additional full forward from the identical response state. This fresh computation is a shadow observation: it does not change any token, confidence, refresh decision, or random state on the main Archer trajectory. We compare the cached and fresh Saber actions and their resulting next states, obtaining 5,122 paired probes while reproducing all 427 original completions, step counts, and main-trajectory NFEs exactly. We then measure how decision disagreement varies with $D_t$ and compare distance with age using partial Spearman correlations that control for the other signal.

\begin{table}[H]
\centering
\caption{Decision-level cache validity on MBPP using 5,122 non-intervening shadow forwards against full refresh.}
\label{tab:cache-validity}
\footnotesize

\begin{tabular*}{\textwidth}{@{\extracolsep{\fill}}lccc@{}}
\toprule
\multicolumn{4}{c}{\textit{Calibration by anchor-relative state distance}} \\
\midrule
$D_t$ & Probes & Action Dis. (\%) & Next-state $d_H$ \\
\midrule
$0$--$3$   & 805       & 19.88 & 0.47 \\
$4$--$7$   & $1{,}566$ & 48.53 & 1.26 \\
$8$--$11$  & $1{,}697$ & 56.45 & 1.54 \\
$12$--$14$ & $1{,}054$ & 65.09 & 1.90 \\
\midrule
\multicolumn{4}{c}{\textit{Refresh-signal comparison (partial Spearman $\rho$)}} \\
\midrule
Signal & \multicolumn{2}{c}{Action Dis.} & Next-state $d_H$ \\
\midrule
Anchor distance $D_t$ & \multicolumn{2}{c}{\textbf{0.145}} & \textbf{0.145} \\
Cache age $A_t$       & \multicolumn{2}{c}{0.040}          & 0.045 \\
\bottomrule
\end{tabular*}
\end{table}

Table~\ref{tab:cache-validity} shows a monotonic calibration pattern: action disagreement rises from $19.88\%$ to $65.09\%$ as the response moves away from its anchor. After controlling for age, distance retains a partial correlation of $0.145$ with both action disagreement and next-state distance; after controlling for distance, age falls to $0.040$ and $0.045$. Thus, two caches of the same age can have very different decision-level validity. Anchor distance is the more informative refresh signal for Archer because it tracks whether reuse changes the rollback transition rather than only the elapsed time since synchronization.

\section{Conclusion}

Rollback is a defining advantage of DLMs, but its practical value depends on avoiding repeated full-sequence recomputation. Archer makes rollback efficient by reusing fixed prompt K/V while recomputing the mutable response. Across benchmarks and DLM backbones, Archer achieves the best overall performance with a $2.57\times$ mean speedup, reaching up to $2.95\times$ acceleration and $+3.05$ Pass@1 points. These results establish state-aware prompt reuse as a practical basis for scalable revisable generation. They also show that bounded cache staleness can moderate premature feedback rather than merely introduce approximation error. Archer therefore reframes caching as a mechanism for improving both efficiency and correction dynamics in rollback-capable DLMs as sequences and rollback horizons continue to grow.

\section{Limitations}
\label{sec:limitations}

Archer deliberately adopts a conservative cache boundary: it reuses prompt
states while recomputing every state derived from the mutable response. This
choice preserves unrestricted rollback, but it does not exhaust the possible
computational savings. Generation-side caching remains a promising direction
when additional structure is available, for example sparse attention,
model-specific validity certificates, or mechanisms that can identify an
unchanged dependency region without first reproducing the full computation.
The central challenge is to obtain such reuse without treating a provisional
token as immutable or silently changing the rollback transition.

More broadly, prompt reuse is only one component of efficient revisable
generation. Archer uses a state-distance controller and leaves the underlying
decoder unchanged; future work could combine rollback-compatible caching with
adaptive token- or layer-level reuse, learned synchronization policies,
parallel acceptance, and systems-level attention optimizations. Understanding
which of these mechanisms can be composed while retaining reliable revision is
an important open problem for scaling rollback-capable DLMs to longer contexts
and more demanding generation tasks.

\FloatBarrier
\bibliography{references}
\bibliographystyle{preprint}

\newpage
\appendix
\onecolumn
\section{Formal Analysis}
\label{app:theory}

This section supplies the assumptions and proofs underlying the claims in the main paper.  The analysis separates three notions that are easy to conflate.  Archer preserves the \emph{ability} to revise any response position, but it does not claim bitwise equivalence with eager decoding.  Its prompt cache is an approximation whose local error is controlled by the distance from the cache anchor.  Whether that numerical error changes the trajectory depends on the margin of the rollback decision, not on logit error alone.

\subsection{Notation and Decoder State}

Let $p\in\mathcal V^P$ be a fixed prompt and let
$g_t\in(\mathcal V\cup\{\Mask\})^G$ be the response after update $t$.
The complete sampler state is denoted by
$\xi_t=(g_t,u_t)$, where $u_t$ collects confidence histories and any other
state used by the rollback rule.  A full bidirectional forward can be written
as
\begin{equation}
z_t^\star=\Phi_\theta(g_t;\Cprompt(g_t)),
\qquad
\xi_{t+1}=\Decoder(\xi_t,z_t^\star).
\label{eq:app-eager}
\end{equation}
Here $\Cprompt(g)$ stacks the prompt K/V states from every transformer layer
when the response is $g$.  If the most recent cache was created at $g_r$,
Archer instead evaluates
\begin{equation}
\hat z_t=\Phi_\theta(g_t;\Cprompt(g_r)),
\qquad
\hat\xi_{t+1}=\Decoder(\xi_t,\hat z_t).
\label{eq:app-cached}
\end{equation}
Equations~\eqref{eq:app-eager} and~\eqref{eq:app-cached} compare the two
forwards at the \emph{same} sampler state.  This distinction is important:
after their decisions differ, the two complete trajectories need not remain
at the same $g_t$.

\subsection{Why Response-State Reuse Conflicts with Arbitrary Rollback}

The cache boundary follows from the dependency structure of bidirectional
attention.  Let $\Delta^0$ be the response positions changed by a rollback,
and define the structural dependency closure at layer $\ell$ by
\begin{equation}
\Delta^\ell
=
\{i:\exists j\in\Delta^{\ell-1}
\text{ with an attention edge }j\!\rightarrow\! i\}.
\label{eq:app-closure}
\end{equation}

\begin{proposition}[Dense rollback closure]
\label{prop:app-closure}
For a transformer layer with dense bidirectional attention,
$\Delta^0\neq\varnothing$ implies
$\Delta^1=\{1,\ldots,P+G\}$.  Consequently, no nontrivial set of hidden
states or K/V states from a previous response has a universal exact-reuse
guarantee after an arbitrary rollback.
\end{proposition}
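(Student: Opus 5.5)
The plan has two parts: a one-line structural argument for the closure, followed by a counterexample construction for the ``no universal exact-reuse'' consequence.

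\textbf{Step 1: the closure.} Under dense bidirectional attention, every query position $i\in\{1,\ldots,P+G\}$ attends to every key position $j$. So the attention graph of Eq.~\eqref{eq:app-closure} contains the edge $j\!\rightarrow\! i$ for all pairs $(i,j)$. Pick any $j\in\Delta^0$, which is possible because $\Delta^0\neq\varnothing$. Every $i$ then has an incoming edge from $j$, and the definition immediately gives $\Delta^1=\{1,\ldots,P+G\}$. By induction, $\Delta^\ell$ is the full index set for every $\ell\ge1$. This part is purely combinatorial. It uses only the fact that the layer-$1$ output at $i$ is a function of $\operatorname{Attn}(Q_i,K,V)$, whose arguments include the key and value at $j$. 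Layer-$0$ K/V at $j$ change because the embedding of the revised token changes.

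\textbf{Step 2: structural dependence is not yet numerical change.} The consequence claims that no nontrivial set of cached states is exactly reusable \emph{universally}. I read this as a quantifier statement. For every nonempty set $S$ of (layer, position) states with layer $\ge 1$, there exist parameters, or an admissible model, and an edit $\Delta^0$ such that some state in $S$ changes. It suffices to show that, for a generic dense layer, the map $x_j\mapsto h_i^{1}$ is non-constant for every pair $(i,j)$.

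\textbf{Step 3: a concrete witness.} Fix $i$ and $j$ and write
\[
h_i^1=\sum_k \operatorname{softmax}_k\!\bigl(q_i^\top k_k/\sqrt d\bigr)\, v_k .
\]
If the value vector $v_j$ is replaced by $v_j'\neq v_j$ while all keys stay fixed, the output moves by $a_{ij}(v_j'-v_j)$. Since softmax weights are strictly positive, $a_{ij}>0$, so this displacement is nonzero. The residual stream and the feed-forward block then carry it forward. A nonzero input perturbation to a residual block can be cancelled only on a measure-zero set of parameters, or one can simply choose a layer where the MLP output is zero. Hence $h_i^1$ and all layer-$\ge2$ K/V at position $i$ change. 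For position $i=j$ itself, the token embedding changes directly, and the same conclusion holds. Together these give the ``no universal guarantee'' statement.

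\textbf{Main obstacle.} The delicate point is Step 3. Exact cancellations, such as value vectors that coincide for two tokens or an MLP that undoes the attention shift, can make particular edits leave states unchanged. My plan is to avoid the impossible claim that changes \emph{always} occur. Instead I will exhibit one edit (two tokens with distinct value projections) for which the change is nonzero, relying on strict positivity of softmax weights. This matches the statement's wording of ``universal'' and its remark that the closure concerns structure rather than magnitude.
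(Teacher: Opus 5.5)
Your Step 1 is the paper's argument: pick $j\in\Delta^0$, note that dense attention supplies the edge $j\!\rightarrow\! i$ for every $i$, and let later layers inherit the full closure. The difference is in the ``consequence.'' The paper constructs no numerical change. It states that the claim concerns possible dependence rather than magnitude. It then argues that certifying a structurally affected state as unchanged would require evaluating that very computation, so no skip rule can be universal. You instead read ``no universal guarantee'' as an existential statement over models and edits. You then exhibit a witness using the strict positivity of softmax weights.

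Your route yields a proved statement where the paper only gives an informal remark. Your restriction to states at layer $\ge1$, so K/V consumed from layer $2$ onward, is also more careful than the paper's wording. First-layer prompt K/V are computed from prompt embeddings alone and are in fact exactly reusable. The paper's route speaks to a fixed pretrained model and to the cost of certification. Your constructed-model witness does not cover that case. Your ``measure-zero'' remark would be needed for it, and you assert it without proof.

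Two small tightenings to your witness:
\begin{itemize}
\item The clean displacement $a_{ij}(v_j'-v_j)$ requires the edit to leave $k_j$ fixed, and ``two tokens with distinct value projections'' does not ensure this. Take $W_Q=0$ in the witness layer, so all weights equal $1/(P+G)$ and the displacement is $(v_j'-v_j)/(P+G)\neq0$. Alternatively, let the two tokens share a key projection.
\item Carry the perturbation to every later state with the explicit toy model (injective output projection, zero MLP, identity later layers), not the genericity claim.
\end{itemize}
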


\begin{proof}
Every query position $i$ attends to every key position $j$.  Choose any
$j\in\Delta^0$.  The edge $j\!\rightarrow\! i$ exists for every $i$, so every
position belongs to $\Delta^1$.  Later layers inherit this full structural
closure.  The claim concerns possible dependence rather than the magnitude of
a particular numerical change.  Establishing that an affected state happens
to remain identical would require evaluating the affected computation and
therefore cannot provide a universal skip rule.
\end{proof}

Proposition~\ref{prop:app-closure} does not say that every possible response
cache is useless.  Sparse attention, model-specific certificates, or a custom
approximation may permit additional reuse.  It says that a generic method
cannot treat an accepted response token as immutable history while retaining
arbitrary rollback under dense bidirectional attention.  Archer therefore
stores no response hidden state across decoding updates, ensuring that every
revised token is recomputed from the current state.

\begin{proposition}[Preservation of revisability]
\label{prop:app-revisability}
Suppose the original decoder $\Decoder$ may replace or re-mask any response
position.  Archer leaves this action space unchanged: every response embedding,
query, key, value, hidden state, and logit is recomputed before each update.
Prompt caching may change which action is selected, but cannot make a response
position immutable.
\end{proposition}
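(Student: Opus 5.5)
The argument is a direct inspection of the controller in Eq.~\eqref{eq:archer-controller} and Algorithm~\ref{alg:archer}. We split the proposition into two claims. The first is \emph{freshness}: every response-side quantity consumed by $\Decoder$ at step $t$ is a function of the current $g_t$ and of no earlier response. The second is \emph{action-space invariance}: the set of transitions $\Decoder(\xi_t,\cdot)$ can produce, as $z$ ranges over the logit space, does not depend on whether the logits were computed eagerly or with an anchored cache.

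For freshness, we unfold $\hat z_t=\Phi_\theta(g_t;\Cprompt(g_r))$ layer by layer, following Eq.~\eqref{eq:archer-attention}. Induct on $\ell$:
\begin{itemize}
\item At $\ell=0$, the response embeddings are $E(g_t)$.
\item If the layer-$(\ell-1)$ response hidden states depend only on $g_t$ (with $\Cprompt(g_r)$ as a fixed parameter), then so do $Q_g^\ell(g_t)$, $K_g^\ell(g_t)$, $V_g^\ell(g_t)$. The attention output over $[K_p^\ell(g_r)\Vert K_g^\ell(g_t)]$ is then a function of these and the cache, and the feed-forward map preserves this dependence.
\end{itemize}
Hence every response embedding, query, key, value, hidden state and logit is recomputed from $g_t$. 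The only persistent object is $\Cprompt(g_r)$, which indexes prompt positions $1,\ldots,P$ only. On refresh steps the same holds trivially, since \textsc{FullForward} evaluates $\Cprompt(g_t)$.

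For action-space invariance, note that Archer hands $\Decoder$ a logit tensor of exactly the same shape and type as $z_t^\star$, together with the unchanged sampler state $\xi_t$. The decoder rule itself is untouched. Any replacement or re-masking of position $i$ that $\Decoder$ permits from $\xi_t$ for some logit input is therefore still permitted under Archer. Whether it is \emph{selected} depends on $\hat z_t$ versus $z_t^\star$, which is exactly the margin question of Eq.~\eqref{eq:theory-decision} and is not claimed here. No position is excluded from revision, because no response state is cached. Proposition~\ref{prop:app-closure} explains why this is the right boundary: caching any response state would structurally depend on possibly revised tokens.

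The main obstacle is making ``immutable'' precise, since $\Decoder$ reads $u_t$ and the stale cache could in principle bias confidences so that some position is never revised in practice. The plan is to define immutability as a restriction of the reachable action set for \emph{every} logit value, not along a particular trajectory. Under that definition the claim follows from $\Decoder$ being unmodified and accepting arbitrary logits. We would state this explicitly to avoid overclaiming trajectory-level equivalence.
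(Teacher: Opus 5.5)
Your proposal is correct and matches the paper's proof. Response embeddings and K/V are recomputed from $g_t$ and attend to the cached prompt K/V, and the unmodified decoder receives a full response-logit tensor, so its revision domain is unchanged even though the selected action may differ. Your layerwise induction and your definition of immutability as a restriction holding for every logit input only make explicit what the paper states informally. One wording fix: $\hat z_t$ does depend on $g_r$ through the cache, so your opening phrase ``a function of the current $g_t$ and of no earlier response'' should read ``no response tensor from an earlier $g$ is reused,'' which is what your induction actually establishes.
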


\begin{proof}
At a cached forward, response queries attend to cached prompt K/V and newly
computed response K/V.  No response tensor from an earlier $g$ is supplied to
the model.  Archer then passes a full response-logit tensor to the unmodified
decoder.  Hence every action available to $\Decoder$ under an eager forward
remains representable.  Approximate prompt K/V can alter the logits and thus
the chosen action, but they do not remove any response position from the
decoder's revision domain.
\end{proof}

This is the precise sense in which Archer is rollback-compatible.  It
preserves revisability, not necessarily the eager trajectory.

\subsection{End-to-End Computational Cost}

Let $N=P+G$, and let $L$ be the number of transformer layers.  At layer
$\ell$, write $\alpha_\ell$ for the position-wise projection and feed-forward
cost and $\beta_\ell$ for one query--key interaction.  A full forward has
leading cost
\begin{equation}
C_{\rm full}
=\sum_{\ell=1}^{L}
\left[\alpha_\ell N+\beta_\ell N^2\right].
\label{eq:app-full-cost}
\end{equation}
An Archer cached forward evaluates only $G$ fresh query paths, while each
query still attends to all $N$ positions.  Including cache assembly and
dispatch overhead $H$, its cost is
\begin{equation}
C_{\rm cache}
=\sum_{\ell=1}^{L}
\left[\alpha_\ell G+\beta_\ell GN\right]+H.
\label{eq:app-cache-cost}
\end{equation}
Thus Archer reduces prompt-side projections, prompt queries, and prompt
feed-forward paths without shortening the receptive field of a response
query.

Let eager decoding use $M_0$ forward steps.  An Archer trajectory may have a
different length $M_K$ because approximate logits can change a decoding
decision.  If $R_K$ of those steps are full refreshes, the cost-model speedup
is
\begin{equation}
S(K)
=
\frac{M_0 C_{\rm full}}
{R_K C_{\rm full}+(M_K-R_K)C_{\rm cache}}.
\label{eq:app-end-cost}
\end{equation}
For equal trajectory lengths, define $q_K=R_K/M_K$ and
$\eta=C_{\rm cache}/C_{\rm full}$.  Equation~\eqref{eq:app-end-cost} reduces
to
\begin{equation}
S(K)=\left[q_K+(1-q_K)\eta\right]^{-1}.
\label{eq:app-speedup-exact}
\end{equation}
Ignoring $H$ and layerwise constant differences gives
$\eta\approx G/(P+G)$.  This yields the idealized expression in the main
paper and the cached-step ceiling $1+P/G$.  Wall-clock speed can depart from
this ceiling because GPU kernels, memory movement, prompt-length variation,
refresh frequency, and trajectory length all remain visible in
Eq.~\eqref{eq:app-end-cost}.

\subsection{From Prompt Staleness to Logit Error}

We next derive the local approximation bound.  Let
$H_{g,t}^{\ell}$ and $\hat H_{g,t}^{\ell}$ be eager and cached response hidden
states after layer $\ell$, both evaluated at $g_t$, and define
\begin{equation}
e_\ell=\|H_{g,t}^{\ell}-\hat H_{g,t}^{\ell}\|,
\qquad e_0=0.
\label{eq:app-layer-error}
\end{equation}
At layer $\ell$, prompt-cache staleness is
\begin{equation}
s_t^\ell
=
\|K_p^\ell(g_t)-K_p^\ell(g_r)\|
+\|V_p^\ell(g_t)-V_p^\ell(g_r)\|.
\label{eq:app-staleness}
\end{equation}
Assume the response update at layer $\ell$ is locally Lipschitz in its
response input and prompt K/V.  For constants $a_\ell,b_\ell\ge0$,
\begin{equation}
\|\Phi_\ell(H,C)-\Phi_\ell(H',C')\|
\le a_\ell\|H-H'\|+b_\ell\|C-C'\|.
\label{eq:app-layer-lipschitz}
\end{equation}

\begin{lemma}[Layerwise propagation]
\label{lem:app-propagation}
Under Eq.~\eqref{eq:app-layer-lipschitz},
\begin{equation}
e_\ell\le a_\ell e_{\ell-1}+b_\ell s_t^\ell.
\label{eq:app-recurrence}
\end{equation}
\end{lemma}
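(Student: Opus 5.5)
The recurrence follows from one application of the assumed Lipschitz bound, Eq.~\eqref{eq:app-layer-lipschitz}, once the two forwards are written as the same layer map applied to different inputs. At layer $\ell$ the eager response update is
\[
H_{g,t}^{\ell}=\Phi_\ell\bigl(H_{g,t}^{\ell-1},C_t^\ell\bigr),
\qquad C_t^\ell=(K_p^\ell(g_t),V_p^\ell(g_t)).
\]
The cached update is
\[
\hat H_{g,t}^{\ell}=\Phi_\ell\bigl(\hat H_{g,t}^{\ell-1},C_r^\ell\bigr),
\qquad C_r^\ell=(K_p^\ell(g_r),V_p^\ell(g_r)).
\]
This form relies on the structure established in Eq.~\eqref{eq:archer-attention} and Proposition~\ref{prop:app-revisability}. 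The response queries, keys and values are recomputed from the current response hidden states in both forwards. The parameters and the response tokens $g_t$ are shared. So the only differences between the two updates are the incoming response hidden state and the prompt K/V block. In particular, $\Phi_\ell$ is the same function in both cases. It absorbs the response self-attention, the response-to-prompt attention, the residual connections and the feed-forward path.

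Substituting $H=H_{g,t}^{\ell-1}$, $H'=\hat H_{g,t}^{\ell-1}$, $C=C_t^\ell$ and $C'=C_r^\ell$ into Eq.~\eqref{eq:app-layer-lipschitz} gives
\[
e_\ell\le a_\ell e_{\ell-1}+b_\ell\|C_t^\ell-C_r^\ell\|.
\]
It remains to identify the second term with $s_t^\ell$. I will fix the norm on the pair $C=(K,V)$ to be the sum of the component norms, $\|(K,V)\|:=\|K\|+\|V\|$. With that convention, $\|C_t^\ell-C_r^\ell\|=s_t^\ell$ exactly, matching Eq.~\eqref{eq:app-staleness}. If a different product norm were intended, any norm equivalent to this sum would work after rescaling $b_\ell$ by the equivalence constant.

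The main point to check is that no other quantity enters the difference. The embedding layer is evaluated on the same $g_t$ in both forwards, which gives the base case $e_0=0$. The response K/V at layer $\ell$ are functions of $H^{\ell-1}$, so their discrepancy is already accounted for by $e_{\ell-1}$ inside the Lipschitz constant $a_\ell$. One subtlety concerns the prompt K/V at layer $\ell$. In the eager forward they come from prompt hidden states that themselves depend on $g_t$. In the cached forward they are frozen at $g_r$. The lemma does not unroll this dependence: it treats $C_t^\ell$ directly as an input and pays for its mismatch through $s_t^\ell$. That dependence is only unrolled later, via $L_C$. Finally, the assumption is local, so the bound holds as long as both argument pairs lie in the neighborhood where Eq.~\eqref{eq:app-layer-lipschitz} applies. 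I will state this explicitly rather than prove it.
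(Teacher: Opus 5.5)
Your proof is correct and takes essentially the paper's route: the paper adds and subtracts $\Phi_\ell(\hat H_{g,t}^{\ell-1},C_t^\ell)$ and then applies Eq.~\eqref{eq:app-layer-lipschitz} separately to the $H$-only and $C$-only differences via the triangle inequality, whereas you apply the jointly stated bound in one step, which is equivalent. Your convention $\|(K,V)\|=\|K\|+\|V\|$ is exactly what identifies $\|C_t^\ell-C_r^\ell\|$ with $s_t^\ell$, a step the paper leaves implicit.
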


\begin{proof}
Add and subtract
$\Phi_\ell(\hat H_{g,t}^{\ell-1},C_t^\ell)$ between the eager and cached
updates.  The triangle inequality separates the error inherited from the
previous response layer and the new error caused by prompt K/V.  Applying
Eq.~\eqref{eq:app-layer-lipschitz} to the two terms yields
Eq.~\eqref{eq:app-recurrence}.
\end{proof}

Unrolling the recurrence and applying an $L_{\rm head}$-Lipschitz output head
gives
\begin{equation}
\|z_t^\star-\hat z_t\|_\infty
\le
L_{\rm head}
\sum_{\ell=1}^{L}
b_\ell s_t^\ell
\prod_{j=\ell+1}^{L}a_j.
\label{eq:app-unrolled}
\end{equation}
This expression makes two points explicit.  Staleness can enter at every
layer because each prompt representation depends on the response, and an
early discrepancy can be amplified by later layers.

To connect this bound to Archer's controller, let $E(g)$ stack response token
embeddings and assume their diameter is bounded by $B$.  Then
\begin{equation}
\|E(g_t)-E(g_r)\|_F\le B\sqrt{d_H(g_t,g_r)}.
\label{eq:app-embedding-hamming}
\end{equation}
If the layer-$\ell$ prompt-cache map is locally $\kappa_\ell$-Lipschitz in
$E(g)$, then $s_t^\ell\le\kappa_\ell B\sqrt{D_t}$.  Substitution into
Eq.~\eqref{eq:app-unrolled} yields
\begin{equation}
\|z_t^\star-\hat z_t\|_\infty
\le \Gamma\sqrt{D_t},
\qquad
\Gamma
=L_{\rm head}B
\sum_{\ell=1}^{L}
b_\ell\kappa_\ell
\prod_{j=\ell+1}^{L}a_j.
\label{eq:app-radius-bound}
\end{equation}
Archer reuses the cache only while $D_t<K$.  Because Hamming distance is
integer-valued, every cached step therefore satisfies
\begin{equation}
\|z_t^\star-\hat z_t\|_\infty
\le\Gamma\sqrt{K-1}.
\label{eq:app-k-bound}
\end{equation}
The constants are local and generally unavailable for a large pretrained
model, so Eq.~\eqref{eq:app-k-bound} is a structural guarantee rather than a
numerical certificate.  It explains why the controller uses response drift:
unlike elapsed time, $D_t$ appears directly in the perturbation bound.

\subsection{Decision Fidelity and Trajectory Fidelity}

Exact logits are stronger than the decoder requires.  For sampler state
$\xi$ and logits $z$, define the local decision margin
\begin{equation}
m_{\Decoder}(\xi,z)
=
\inf_{\delta}
\left\{
\|\delta\|_\infty:
\Decoder(\xi,z+\delta)\ne\Decoder(\xi,z)
\right\}.
\label{eq:app-decision-margin}
\end{equation}

\begin{proposition}[One-step decision preservation]
\label{prop:app-decision}
At a common state $\xi_t$, Archer and eager decoding select the same next
state whenever
\begin{equation}
\Gamma\sqrt{D_t}
<m_{\Decoder}(\xi_t,z_t^\star).
\label{eq:app-decision-condition}
\end{equation}
\end{proposition}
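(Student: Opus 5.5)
The plan is to combine the logit-error bound of Eq.~\eqref{eq:app-radius-bound} with the definition of the decision margin in Eq.~\eqref{eq:app-decision-margin}. Everything is evaluated at the common sampler state $\xi_t$. Eager decoding produces $\xi_{t+1}=\Decoder(\xi_t,z_t^\star)$ and Archer produces $\hat\xi_{t+1}=\Decoder(\xi_t,\hat z_t)$, as in Eqs.~\eqref{eq:app-eager} and~\eqref{eq:app-cached}. The two decoder calls therefore differ only in their logit argument.

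\textbf{Step 1 (two cases).} If $t$ is a refresh step ($D_t\ge K$), Archer computes exactly $z_t^\star$, so $\hat z_t=z_t^\star$ and the conclusion is immediate. I would still state this case, because Eq.~\eqref{eq:app-decision-condition} may fail there, yet identity holds regardless. Otherwise $t$ is a cached step.

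\textbf{Step 2 (perturbation).} At a cached step, define $\delta=\hat z_t-z_t^\star$, so that $\hat z_t=z_t^\star+\delta$. Equation~\eqref{eq:app-radius-bound} gives
\[
\|\delta\|_\infty\le\Gamma\sqrt{D_t}.
\]
Under hypothesis~\eqref{eq:app-decision-condition} this yields
\[
\|\delta\|_\infty<m_{\Decoder}(\xi_t,z_t^\star).
\]

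\textbf{Step 3 (contrapositive via the infimum).} Suppose, for contradiction, that $\Decoder(\xi_t,z_t^\star+\delta)\ne\Decoder(\xi_t,z_t^\star)$. Then $\delta$ belongs to the set whose infimum defines the margin, so
\[
m_{\Decoder}(\xi_t,z_t^\star)\le\|\delta\|_\infty.
\]
This contradicts Step 2, so $\hat\xi_{t+1}=\xi_{t+1}$. The argument requires no continuity of $\Decoder$, because the margin is defined directly as an infimum over action-changing perturbations.

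\textbf{Main obstacle.} The main subtlety is that Eq.~\eqref{eq:app-radius-bound} rests on local Lipschitz assumptions: Eq.~\eqref{eq:app-layer-lipschitz} and the $\kappa_\ell$-Lipschitz property of the prompt-cache map. The proof must state that both $g_t$ and $g_r$ lie in the neighborhood where these constants hold, and take that as a hypothesis of the proposition.

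\textbf{Auxiliary state.} If $u_t$ enters $\Decoder$, the margin is defined at the full state $\xi_t$, so no additional argument is needed. I would also verify that the auxiliary state is identical in the two branches, which holds because the comparison is made at the same $\xi_t$ by construction.
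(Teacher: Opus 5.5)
Your proposal is correct and follows the paper's proof: the paper likewise uses Eq.~\eqref{eq:app-radius-bound} to place $\hat z_t$ inside the open $\ell_\infty$ ball of radius $m_{\Decoder}(\xi_t,z_t^\star)$ around $z_t^\star$, then invokes the definition of the margin, which is your infimum-and-contradiction step made explicit. Your separate handling of refresh steps and your note that the bound depends on the local Lipschitz hypotheses are sensible additions that the paper leaves implicit.
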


\begin{proof}
Equation~\eqref{eq:app-radius-bound} places $\hat z_t$ inside the open
$\ell_\infty$ ball of radius $m_{\Decoder}(\xi_t,z_t^\star)$ around the eager
logits.  By definition of the margin, $\Decoder$ is constant throughout this
ball.
\end{proof}

\begin{corollary}[Trajectory preservation]
If Eq.~\eqref{eq:app-decision-condition} holds at every cached step of a
deterministic run, Archer and eager decoding produce the same trajectory and
completion.
\end{corollary}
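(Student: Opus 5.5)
The plan is an induction on the decoding step $t$. We compare the eager trajectory $\xi_t^\star$, where $\xi_{t+1}^\star=\Decoder(\xi_t^\star,z_t^\star)$, with the Archer trajectory $\hat\xi_t$ produced by Eq.~\eqref{eq:archer-controller}. Both start from the same initial state $\xi_0=(g_0,u_0)$, and Algorithm~\ref{alg:archer} begins with a \textsc{FullForward}. The inductive hypothesis is $\hat\xi_t=\xi_t^\star$ for all $t\le T$. Determinism of $\Decoder$ and of both forwards is used throughout: identical states produce identical logits at refresh steps, and identical logits and states produce identical successors.

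For the inductive step, assume $\hat\xi_t=\xi_t^\star$. There are two cases, according to the controller branch.

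\emph{Refresh step} ($D_t\ge K$, or the initial step). Archer evaluates $\Phi_\theta(g_t;\Cprompt(g_t))=z_t^\star$ exactly. Hence $\hat\xi_{t+1}=\Decoder(\xi_t,z_t^\star)=\xi^\star_{t+1}$. The anchor is reset to $g_r\leftarrow g_t$, and since $g_t$ lies on the common trajectory, the anchor is also shared.

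\emph{Cached step} ($D_t<K$). Archer evaluates $\hat z_t=\Phi_\theta(g_t;\Cprompt(g_r))$. Because the states coincide, this is exactly the comparison setting of Eqs.~\eqref{eq:app-eager}--\eqref{eq:app-cached}, evaluated at the common state $\xi_t$. By hypothesis, Eq.~\eqref{eq:app-decision-condition} holds at this step, so Proposition~\ref{prop:app-decision} gives $\Decoder(\xi_t,\hat z_t)=\Decoder(\xi_t,z_t^\star)$, that is, $\hat\xi_{t+1}=\xi^\star_{t+1}$.

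Both cases close the induction. Termination is a function of the state, so both runs stop at the same step, and the completions $g$ coincide. The theorem statement does not involve the trajectory lengths $M_0$ and $M_K$, but they are equal as a byproduct.

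The main subtlety, and the step I expect to need the most care, is that the margin condition must be stated at the \emph{common} state. The hypothesis ``Eq.~\eqref{eq:app-decision-condition} holds at every cached step of a deterministic run'' refers to states on Archer's run. The induction shows these are exactly the eager states, so the margin $m_\Decoder(\xi_t,z_t^\star)$ is evaluated at the right point and no circularity arises. Two further points need attention:
\begin{itemize}
\item The auxiliary sampler state $u_t$ must also coincide. This holds because $u_{t+1}$ is part of $\Decoder$'s output, and the induction is carried on the full $\xi_t$.
\item The anchor $g_r$ used at step $t$ must be the same as in the hypothetical comparison. It is, since every anchor was set at an earlier common state.
\end{itemize}
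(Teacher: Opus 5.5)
Your proof is correct and follows the paper's own argument. The paper also inducts over decoder steps from a common initial state, matches refresh steps because they evaluate the same full model at the same state, and matches cached steps via Proposition~\ref{prop:app-decision}. Your additional bookkeeping on the auxiliary state $u_t$, the shared anchor, and termination makes explicit what the paper's one-line induction leaves implicit.
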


\begin{proof}
Both methods start from the same state.  Proposition~\ref{prop:app-decision}
preserves equality at cached steps, while a refresh evaluates the same full
model at the same state.  Induction over decoder steps completes the proof.
\end{proof}

The converse does not hold: a violation of the sufficient bound need not
change the action.  This is why the shadow-forward study measures action and
next-state disagreement rather than treating logit error alone as the
operational failure criterion.

\subsection{Temporal Anchoring as Delayed Feedback}

The same approximation that creates logit error also changes the dynamics of
revision.  To make this precise, consider a continuous relaxation $y=E(g)$.
Within one anchor interval, the eager and anchored logit maps and their
Jacobians satisfy
\begin{align}
z^{\rm fresh}(y)
&=\Phi_\theta(y;\Cprompt(y)), \notag\\
z_r^{\rm anchor}(y)
&=\Phi_\theta(y;\Cprompt(y_r)),
\label{eq:app-logit-maps}\\[3pt]
J_{\rm fresh}
&=\partial_y\Phi_\theta
+\partial_{\mathcal C}\Phi_\theta\,\partial_y\Cprompt, \notag\\
J_{\rm anchor}
&=\partial_y\Phi_\theta.
\label{eq:app-jacobians}
\end{align}
Archer does not suppress response--response interaction; that information is
recomputed in $\partial_y\Phi_\theta$.  It temporarily suppresses only the
indirect response-to-prompt-to-response path
$\partial_{\mathcal C}\Phi_\theta\,\partial_y\Cprompt$.  A refresh restores
this term by setting $y_r\leftarrow y$.

Equation~\eqref{eq:app-jacobians} supports a conditional, not universal,
quality claim.  If the indirect term amplifies a provisional error direction,
anchoring reduces its immediate gain and gives rollback another update in
which to revise the token.  If the term instead carries useful new evidence,
excessive anchoring delays that evidence and harms the next decision.  The
non-monotone quality curve in the main paper is consistent with these two
regimes.  Moderate $K$ provides short-term inertia followed by reset, whereas
$K\rightarrow\infty$ removes the corrective synchronization that keeps the
approximation aligned with the evolving response.

\subsection{Scope of the Guarantees}

The theory establishes four limited but useful facts.  Dense bidirectional
attention denies a generic exactness guarantee for response-state reuse after
rollback; prompt-only caching preserves the decoder's revision domain; state
distance bounds local prompt-induced logit error under explicit smoothness
assumptions; and a decision margin turns that error bound into a sufficient
condition for trajectory fidelity.  It does not claim global Lipschitz
constants for a pretrained DLM, statistical improvement from staleness, or
exact equality with eager decoding.  The empirical analyses below test the
decision-level consequences that the theory deliberately leaves model
dependent.

\section{Complete Results and Robustness}
\label{app:extended-results}

The main paper reports the operating points that best expose Archer's
quality--latency trade-off. This section supplies the complete refresh-radius
sweep behind those operating points. No radius is selected from a hidden
test-only search.

\subsection{Refresh-Radius Sensitivity}
\label{app:k-sensitivity}

The refresh radius controls how far the response may move from the state at
which prompt K/V was constructed. The limiting cases have direct
interpretations. At $K=1$, every response change invalidates the snapshot and
Archer approaches eager Saber. At $K=\infty$, the initial prompt snapshot is
never synchronized again.

\begin{table}[H]
\centering
\caption{Refresh-radius sweep on LLaDA-8B-Instruct. Time is seconds per problem.}
\label{tab:app-k-sensitivity}
\footnotesize
\setlength{\tabcolsep}{3.2pt}
\resizebox{\textwidth}{!}{%
\begin{tabular}{cccc|ccc|ccc}
\toprule
& \multicolumn{3}{c}{MBPP} & \multicolumn{3}{c}{LiveCodeBench} & \multicolumn{3}{c}{HumanEval} \\
\cmidrule(lr){2-4}\cmidrule(lr){5-7}\cmidrule(l){8-10}
Setting & Pass@1 & Time & Speedup & Pass@1 & Time & Speedup & Pass@1 & Time & Speedup \\
\midrule
Saber      & 44.03 & 8.45 & 1.00$\times$ & 7.25 & 17.75 & 1.00$\times$ & 44.51 & 12.21 & 1.00$\times$ \\
$K=1$      & 44.03 & 8.41 & 1.00$\times$ & 7.25 & 17.82 & 1.00$\times$ & 44.51 & 12.14 & 1.01$\times$ \\
$K=2$      & 44.26 & 8.38 & 1.01$\times$ & 7.25 & 17.84 & 1.00$\times$ & 44.51 & 12.18 & 1.00$\times$ \\
$K=4$      & \textbf{44.96} & 7.10 & 1.19$\times$ & 8.00 & 12.77 & 1.39$\times$ & 42.68 & 9.20 & 1.33$\times$ \\
$K=8$      & \textbf{44.96} & 6.48 & 1.30$\times$ & \textbf{9.50} & 10.43 & 1.70$\times$ & 43.29 & 8.07 & 1.51$\times$ \\
$K=16$     & 43.33 & 5.94 & 1.42$\times$ & 9.25 & 8.92 & 1.99$\times$ & 41.46 & 7.07 & 1.73$\times$ \\
$K=32$     & 42.62 & \textbf{5.89} & \textbf{1.43}$\times$ & 8.00 & \textbf{7.68} & \textbf{2.31}$\times$ & 40.24 & 6.70 & 1.82$\times$ \\
$K=\infty$ & 42.62 & 5.98 & 1.41$\times$ & 7.00 & 8.01 & 2.22$\times$ & 39.63 & \textbf{6.69} & \textbf{1.83}$\times$ \\
\bottomrule
\end{tabular}}
\end{table}

Table~\ref{tab:app-k-sensitivity} reports the full logarithmic sweep on all
three code benchmarks. Latency improves as refreshes become less frequent, whereas quality is
non-monotone.  Moderate radii preserve enough synchronization to avoid
long-lived context error while still delaying prompt-mediated reinforcement.
The deterioration at large radii is therefore not an unexplained tuning
artifact; it is the empirical counterpart of the approximation term in
Eq.~\eqref{eq:app-k-bound}.  The sweep also makes clear that the same radius
need not optimize every benchmark, which motivates reporting per-task
operating points rather than a universal ``best'' $K$.

\section{Mechanism and Controlled Analyses}
\label{app:mechanism}

The main paper establishes the two central mechanism results through matched
feedback and shadow-forward comparisons. Here we retain the supporting
diagnostics needed to interpret those interventions and verify that prompt
reuse does not obtain speed by disabling rollback.

\subsection{Rollback Remains Active}
\label{app:rollback-preservation}

A response position is counted as revised if it is re-masked or replaced
after first receiving a non-mask prediction. Saber and Archer use the same
rollback rule; the no-rollback decoder provides a zero-revision control. We
use the representative radius $K=8$ throughout this diagnostic. The MBPP
measurement uses a fixed 150-problem subset; HumanEval and LiveCodeBench use
their complete evaluation sets.

\begin{table}[H]
\centering
\caption{Rollback activity with $K=8$.}
\label{tab:app-rollback-preservation}
\footnotesize
\setlength{\tabcolsep}{8.5pt}
\begin{tabular}{cccc}
\toprule
Benchmark & Decoder & Revised positions (\%) & Mean updates \\
\midrule
\multirow{3}{*}{MBPP}
& No rollback & 0.00 & 65.97 \\
& Saber       & 29.18 & 146.98 \\
& Archer      & \textbf{30.03} & 147.51 \\
\midrule
\multirow{3}{*}{HumanEval}
& No rollback & 0.00 & 66.26 \\
& Saber       & 27.87 & 157.55 \\
& Archer      & \textbf{30.24} & 158.49 \\
\midrule
\multirow{3}{*}{LiveCodeBench}
& No rollback & 0.00 & 74.73 \\
& Saber       & \textbf{28.21} & 158.83 \\
& Archer      & 26.93 & 163.25 \\
\bottomrule
\end{tabular}
\end{table}

Table~\ref{tab:app-rollback-preservation} shows that Archer retains substantial revision activity and does not shorten the
trajectory relative to Saber. Its speedup therefore does not come from
freezing response tokens or removing opportunities for correction. This
measurement supports Proposition~\ref{prop:app-revisability} at the realized
trajectory level.

\subsection{Matched-State Feedback Intervention}
\label{app:feedback}

End-to-end runs quickly diverge, so a conventional sampler comparison cannot
isolate prompt-feedback timing. For each MBPP problem, we clone the decoder
state after the first token accepted with confidence at least $0.9$. Fresh
immediately rebuilds prompt K/V, whereas Cached retains the pre-acceptance
snapshot. The response state, rollback rule, and all other decoder variables
remain matched. We then follow the selected token for five updates and
evaluate both final programs.

Both branches revise $99.53\%$ of selected tokens at nearly the same time, yet
Cached solves five more branch-exclusive problems and improves Pass@1 by
1.17 points on both MBPP Base and its Extended Test Cases version. Delayed feedback thus changes functional trajectories
without obtaining its gain by disabling rollback. The intervention is local
evidence rather than a universal monotonicity claim; the complete radius sweep
shows that excessive delay can reverse the benefit.

\subsection{Decision-Level Cache Validity}
\label{app:cache-validity}

We collect 5,122 non-intervening shadow forwards from the MBPP run. Each probe
evaluates fresh and cached logits at the same decoder state and compares the
resulting Saber action. Shadow computations leave all 427 main responses,
step counts, and NFEs unchanged.

Action disagreement rises from 19.88\% to 65.09\% across distance bins, and
next-state distance rises from 0.47 to 1.90. Cache age better describes
low-level logit drift, but anchor distance has the stronger partial
association with the next decoding transition. This distinction matches
Archer's objective: the controller need not minimize every floating-point
difference; it should identify when reuse is likely to alter rollback
behavior. The compact calibration and correlation results are reported in
the main paper's decision-level cache-validity analysis.

\section{Generalization Beyond Code}
\label{app:generalization}

Archer's controller observes response-state changes and uses neither code
structure nor execution feedback. We therefore extend the evaluation to
mathematical, financial, and biomedical reasoning. MATH-500
\citep{hendrycks2021math} requires free-form competition-mathematics answers,
FinQA \citep{chen2021finqa} requires numerical reasoning over financial
reports, and PubMedQA \citep{jin2019pubmedqa} requires yes/no/maybe decisions
from biomedical abstracts. The three tasks differ substantially in prompt
length and answer form, providing a direct test of whether prompt-side reuse
transfers beyond program synthesis.

\begin{table}[H]
\centering
\caption{Cross-domain comparison with LLaDA-8B-Instruct.}
\label{tab:app-cross-domain}
\scriptsize
\setlength{\tabcolsep}{1.0pt}
\resizebox{\textwidth}{!}{%
\begin{tabular}{cccc|ccc|ccc|cc}
\toprule
& \multicolumn{3}{c}{MATH-500} & \multicolumn{3}{c}{FinQA} & \multicolumn{3}{c}{PubMedQA} & \multicolumn{2}{c}{Overall} \\
\cmidrule(lr){2-4}\cmidrule(lr){5-7}\cmidrule(lr){8-10}\cmidrule(l){11-12}
Method
& Acc. $\uparrow$ & Time $\downarrow$ & Speedup $\uparrow$
& Acc. $\uparrow$ & Time $\downarrow$ & Speedup $\uparrow$
& Acc. $\uparrow$ & Time $\downarrow$ & Speedup $\uparrow$
& Perf. $\uparrow$ & Speedup $\uparrow$ \\[-1pt]
\midrule
Saber
& \textbf{25.60} & 8.13 & 1.00$\times$
& 29.56 & 24.66 & 1.00$\times$
& 69.40 & 14.15 & 1.00$\times$
& 41.52 & 1.00$\times$ \\
\textbf{Archer}
& 25.00 & \textbf{6.39} & \textbf{1.27$\times$}
& \textbf{30.17} & \textbf{9.86} & \textbf{2.50$\times$}
& \textbf{69.50} & \textbf{9.58} & \textbf{1.48$\times$}
& \textbf{41.56} & \textbf{1.75$\times$} \\
\bottomrule
\end{tabular}}
\end{table}

We evaluate the fixed geometric sweep $K\in\{4,8,12,16\}$ and report a
representative Pareto point for each dataset in
Table~\ref{tab:app-cross-domain}. The selected radii are $16$, $16$, and $4$
for MATH-500, FinQA, and PubMedQA, respectively. Archer reduces latency in all
three domains. The Overall columns average the three accuracy scores and the
three corresponding speedups.
On FinQA it improves numerical-answer accuracy by $0.61$ points while
accelerating decoding by $2.50\times$; on PubMedQA it improves label accuracy
by $0.10$ points at $1.48\times$ speedup. On MATH-500, a near-quality-preserving
point obtains $1.27\times$ speedup with a $0.60$-point accuracy change. The
same cache controller therefore produces useful
quality--latency points across three non-code domains without task-specific
logic.

MATH-500 is scored after canonical answer normalization, and PubMedQA uses
exact normalized yes/no/maybe labels. Because our FinQA decoder emits a
free-form number rather than an executable program, we compare the extracted
answer with the gold \texttt{exe\_ans}; the reported value is numerical-answer
accuracy rather than official program accuracy. Stating this distinction and
the fixed candidate set makes the scope of the cross-domain evidence explicit.

\section{Scaling and Systems Analysis}
\label{app:scaling}

Archer removes repeated prompt projections while recomputing the revisable
response. We test this systems claim through realized cache use and prompt-
and response-length scaling. At the reported MBPP and LiveCodeBench operating
points, 81.62\% and 80.57\% of main-trajectory model calls, respectively, use
the cached prompt path. Archer may execute slightly more NFEs than Saber, so
the speedup comes from lower work per update rather than a shorter correction
trajectory.

\subsection{Prompt-Length Scaling}
\label{app:prompt-scaling}

We hold the response length at $G=256$ and increase the fixed prompt from 128
to 2,048 tokens. At each length, full and cached updates start from the same
state and are measured after warm-up, with CUDA synchronization immediately
before and after every timed call.

\begin{table}[H]
\centering
\caption{Prompt-length scaling with $G=256$ and $K=8$. Time is in milliseconds.}
\label{tab:app-prompt-scaling}
\footnotesize
\setlength{\tabcolsep}{3.2pt}
\resizebox{\textwidth}{!}{%
\begin{tabular}{l*{6}{cc}}
\toprule
\multirow{2}{*}{Update} &
\multicolumn{2}{c}{$P=128$} &
\multicolumn{2}{c}{$P=256$} &
\multicolumn{2}{c}{$P=512$} &
\multicolumn{2}{c}{$P=1024$} &
\multicolumn{2}{c}{$P=2048$} &
\multicolumn{2}{c}{Overall} \\
\cmidrule(lr){2-3}\cmidrule(lr){4-5}\cmidrule(lr){6-7}
\cmidrule(lr){8-9}\cmidrule(lr){10-11}\cmidrule(lr){12-13}
& Time$\downarrow$ & Speedup$\uparrow$
& Time$\downarrow$ & Speedup$\uparrow$
& Time$\downarrow$ & Speedup$\uparrow$
& Time$\downarrow$ & Speedup$\uparrow$
& Time$\downarrow$ & Speedup$\uparrow$
& Time$\downarrow$ & Speedup$\uparrow$ \\
\midrule
Full & 46.11 & 1.00$\times$ & 57.80 & 1.00$\times$ & 82.88 & 1.00$\times$ & 145.66 & 1.00$\times$ & 271.91 & 1.00$\times$ & 120.87 & 1.00$\times$ \\
Cached & \textbf{35.41} & \textbf{1.30$\times$} & \textbf{36.39} & \textbf{1.59$\times$} & \textbf{40.04} & \textbf{2.07$\times$} & \textbf{46.27} & \textbf{3.15$\times$} & \textbf{58.84} & \textbf{4.62$\times$} & \textbf{43.39} & \textbf{2.55$\times$} \\
\bottomrule
\end{tabular}
}
\end{table}

Table~\ref{tab:app-prompt-scaling} shows that cached-step speedup increases monotonically from $1.30\times$ to $4.62\times$.
Full-forward cost grows rapidly with the prompt, whereas cached-step time is
dominated by the fixed response region. The measured trend directly matches
the complexity analysis: prompt reuse becomes more valuable as the immutable
context occupies a larger fraction of the sequence.

\subsection{Response-Length Scaling}
\label{app:response-scaling}

A complementary sweep fixes $P=512$ and varies the response budget. Unlike
the per-update study, this comparison measures the complete realized
trajectory, including cache construction and refreshes. Archer remains faster
at every tested response length, with gains of $1.46$--$1.67\times$. The
benefit does not vanish when response computation dominates because the prompt
is still revisited over many rollback updates.

\section{Qualitative Analysis and Failure Cases}
\label{app:qualitative}

Aggregate paired outcomes show that feedback timing matters, but they do not
show how two trajectories become functionally different. We therefore
manually inspected branch-exclusive MBPP outcomes whose probe lies inside
executable code rather than at the terminal token. The accompanying case-study
figure uses two Cached-only successes.

\begin{figure}[t]
    \centering
    \includegraphics[width=\textwidth]{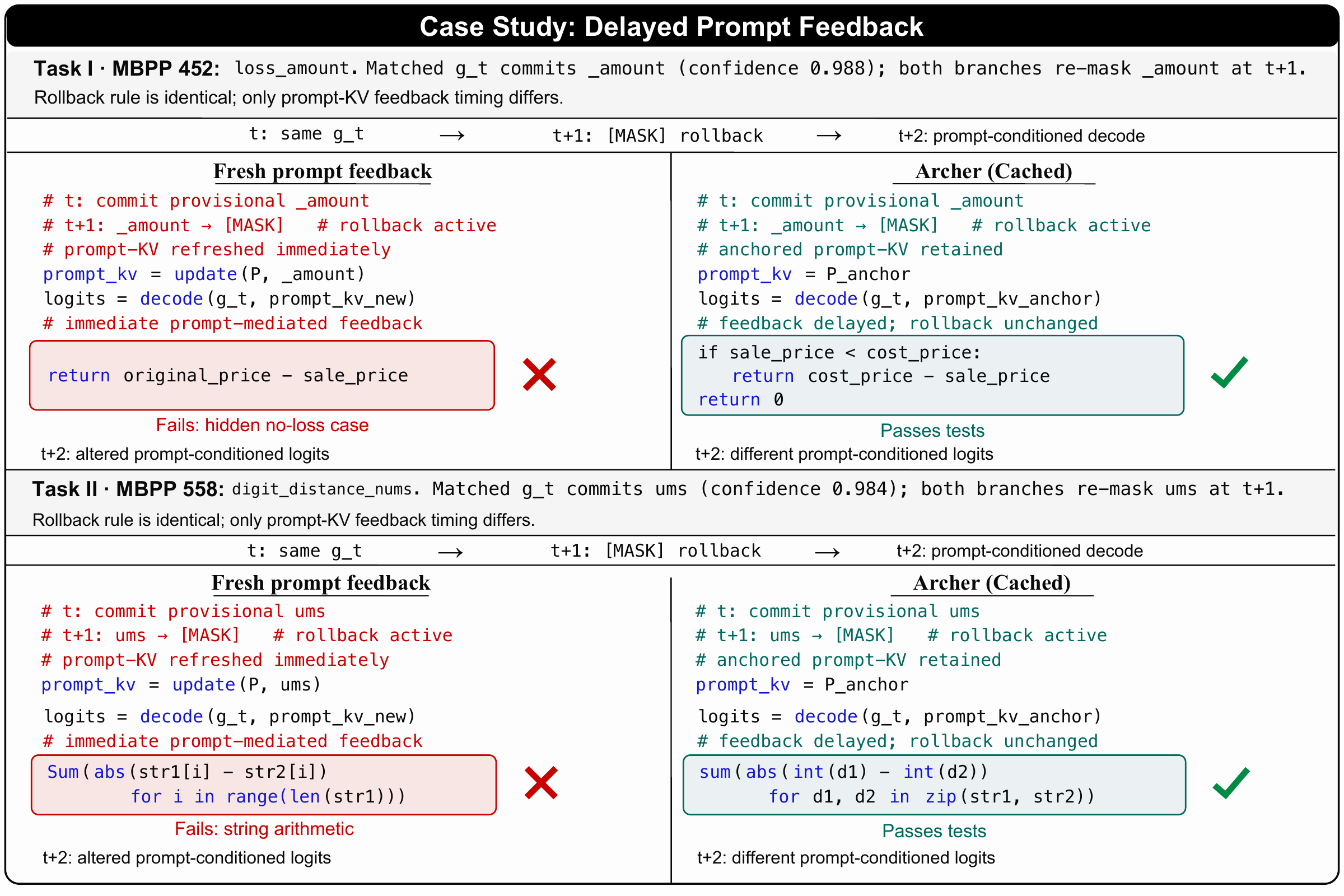}
    \caption{Delayed prompt-feedback case studies on MBPP.}
    \label{fig:app-feedback-cases}
\end{figure}

In Task 452 (\texttt{loss\_amount}), Fresh settles on an unconditional
\texttt{original\_price - sale\_price} return, which fails whenever the sale
does not produce a loss. Cached instead restores the necessary conditional and
returns zero in the no-loss case. In Task 558 (\texttt{digit\_distance\_nums}),
Fresh applies \texttt{abs} directly to string characters, whereas Cached
converts paired digits to integers before subtraction. Both high-confidence
probed fragments are re-masked within the next update in both branches; the
final difference is therefore not caused by making a token immutable.

The paired intervention also contains the complementary boundary case, Task 760,
where Fresh reaches the correct set-cardinality test and Cached adopts an
incorrect duplicate detector. Delayed feedback is thus an inductive bias, not
a free accuracy guarantee. Moderate anchoring can prevent transient evidence
from being amplified too quickly, but it can also postpone useful evidence.

\section{Detailed Experimental Setup}
\label{app:experimental-setup}

This section specifies the evaluation protocol used throughout the paper. The
main experiments follow the code-generation setting of Saber, while the
supplementary cross-domain study changes only the task prompt and evaluator.
Unless noted otherwise, we decode each problem once without demonstrations
or test-time sampling.

\subsection{Datasets}

Our primary comparison uses MBPP \citep{austin2021mbpp}, HumanEval
\citep{chen2021humaneval}, their Extended Test Cases (ET) versions
\citep{dong2025codescore}, and LiveCodeBench \citep{jain2025livecodebench}.
MBPP contains 427 sanitized Python synthesis problems and HumanEval contains
164 function-completion tasks. Their ET versions preserve the original
problems but evaluate generated programs against additional edge-case tests.
LiveCodeBench is contamination-aware; we use all 400 tasks in
\texttt{release\_v1}. In each code benchmark, the model receives the task
description and, where applicable, the function signature, public tests, or
starter code. The model must produce one executable Python completion that is
evaluated without manual repair.

To evaluate transfer beyond code, we additionally use the 500-problem
MATH-500 test set derived from MATH \citep{hendrycks2021math}, the canonical
FinQA test split \citep{chen2021finqa}, and the 1,000-example expert-labeled
PubMedQA set \citep{jin2019pubmedqa}. MATH-500 requests a boxed final answer;
FinQA requests a numerical answer from a financial report and table; and
PubMedQA requires one of \emph{yes}, \emph{no}, or \emph{maybe} after reading
the question and abstract. These prompts preserve the same model template and
response budget as the code experiments.

\subsection{Baselines}

We compare Archer with three classes of DLM decoding methods. The first is
standard confidence-based LLaDA decoding, which executes the full denoising
schedule without KV reuse. The second includes efficient DLM methods that
target standard, non-rollback decoding: Fast-dLLM in cache-only and parallel
forms \citep{wu2025fastdllm}, dKV-Cache in decode and greedy modes
\citep{ma2025dkvcache}, and dLLM-Cache \citep{liu2025dllmcache}. The third is
Saber \citep{dong2026saber}, which performs a full forward pass at every
rollback update. Archer uses exactly Saber’s acceptance, replacement, and
re-masking rules; the only difference is whether the prompt K/V state is
reused or refreshed. This pairing isolates the effect of the cache controller
from a change in the rollback decoder.

All baselines use the same task prompt, response budget, base-model revision,
and evaluator whenever their algorithms permit. We follow the released
configurations of Fast-dLLM, dKV-Cache, dLLM-Cache, and Saber. Fast-dLLM is
evaluated both with cache-only transfer and factor-one parallel transfer, and
dKV-Cache-Greedy uses its released random ordering with seed 42.

\subsection{Metrics}

For code generation, Pass@1 is the fraction of tasks whose single extracted
completion passes every test in the corresponding evaluator:
\begin{equation}
\mathrm{Pass@1} = \frac{1}{N}\sum_{i=1}^{N}
\mathbb{I}\!\left[\mathrm{Passed}(\widehat y_i)\right].
\end{equation}
Here, $N$ is the number of tasks and $\widehat y_i$ is the decoded completion
for task $i$. For MBPP and HumanEval, we compute this score separately on the
original Base test suite and the Extended Test Cases (ET) suite; the latter evaluates
the same completion against additional edge-case tests.
LiveCodeBench Pass@1 is computed by its official evaluator. MATH-500, FinQA,
and PubMedQA report normalized answer or label accuracy.

We also report the mean number of response updates (Steps), synchronized
generation time per example (Time), and Speedup. Main-table speedups are
computed against confidence-based LLaDA on the same benchmark; the
cross-backbone study instead uses Saber on the same model as its reference.
Timing includes cache construction, refreshes, cached forwards, and decoder
control, but excludes model loading, tokenization, and program execution.

\subsection{Implementation Details}

The primary results use LLaDA-8B-Instruct \citep{nie2025llada}, pinned to
revision \texttt{6059b30}. The cross-backbone study additionally evaluates
Dream-v0-Instruct-7B \citep{ye2025dream} and DiffuCoder-7B-cpGRPO
\citep{gong2025diffucoder}, preserving each model's native mask identifier,
logit alignment, and cache interface. All experiments use temperature zero,
batch size one, a generation length of 256, and a block length of 256. Saber
and Archer use $n=2$ and $\mu=2$.

The principal Archer operating points use $K=11$ on MBPP, $K=10$ on
LiveCodeBench, and $K=8$ on HumanEval. Cross-backbone results use $K=11/15$
for LLaDA, $17/9$ for Dream, and $8/15$ for DiffuCoder on
MBPP/HumanEval, respectively. These operating points are accompanied by the
logarithmic sensitivity sweep in Appendix~\ref{app:k-sensitivity}; they do
not assert that a universal radius is optimal. For the three cross-domain
benchmarks, we evaluate the fixed candidate set
$K\in\{4,8,12,16\}$ and report the stated Pareto point. All timings are
measured with CUDA synchronization immediately before and after generation.

\end{document}